\newcommand{\T}[1]{{\mathcal{#1}}} 
\def\eqref#1{equation~\ref{#1}}
\def\Eqref#1{Equation~\ref{#1}}
\def\1{\bm{1}}
\newcommand{\plusminus}[1] {\scriptsize{$\pm$ #1}}
\DeclareMathAlphabet{\mathsfit}{\encodingdefault}{\sfdefault}{m}{sl}
\SetMathAlphabet{\mathsfit}{bold}{\encodingdefault}{\sfdefault}{bx}{n}
\newtheorem*{prop*}{Proposition}
\newcommand{\SOtwo}{\mathrm{SO}(2)}
\title[Probabilistic Symmetry for Multi-Agent Dynamics]{Probabilistic Symmetry for Multi-Agent Dynamics}
\author{%
 \Name{Sophia Sun} \Email{sophiasun@eng.ucsd.com}\\
 \addr University of California, San Diego
 \AND
 \Name{Robin Walters} \Email{r.walters@northeastern.edu}\\
 \addr Northeastern University
  \AND
 \Name{Jinxi Li} \Email{jinxi.li@connect.polyu.hk}\\
 \addr Hong Kong Polytechnic University
  \AND
 \Name{Rose Yu} \Email{roseyu@ucsd.edu}\\
 \addr University of California, San Diego%
}
\begin{document}

\maketitle

\begin{abstract}%
Learning multi-agent dynamics is a core AI problem with broad applications in robotics and autonomous driving. While most existing works focus on deterministic prediction, producing probabilistic forecasts to quantify uncertainty is critical for downstream decision-making tasks such as motion planning and collision avoidance. 
By leveraging the internal symmetry in multi-agent dynamics, specifically rotational equivariance, we can improve not only the accuracy, but also calibration of our probabilistic forecasts. We propose a novel deep dynamics model, Probabilistic Equivariant Continuous COnvolution (PECCO) for probabilistic prediction of multi-agent trajectories. PECCO extends equivariant continuous convolution to model the joint velocity distribution of multiple agents. It uses dynamics integration to propagate the uncertainty from velocity to position. We introduce Energy Score, a proper scoring rule, to evaluate probabilistic predictions. On both synthetic and real-world datasets, PECCO shows significant improvements in accuracy and calibration compared to non-equivariant baselines. 

\noindent Our code is released at \url{https://github.com/Rose-STL-Lab/PECCO}. The appendix of the paper can be accessed at  \url{https://arxiv.org/abs/2205.01927}.
\end{abstract}

\begin{keywords}%
Multi-Agent Modeling, Probabilistic forecasting, deep dynamics model, uncertainty quantification, equivariant neural networks
\end{keywords}

\section{Introduction}
Predicting the future trajectory of multiple agents is a critical task with applications in autonomous driving \citep{argoverse}, social behavioral modeling \citep{sun2021multi}. 
In practice, the problem is difficult due to the inherent stochasticity of human motion, and the strong inter-dependency among the agents where the number of interactions grows quadratically with the number of agents. Moreover, agent movements are often influenced by environmental features such as road markings, boundaries, and social preference, which are impossible to measure and model effectively. Such a partially observed setting introduces a significant amount of uncertainty. 

Many recent works on learning multi-agent dynamics has shifted to probabilistic modeling as a principled framework to represent uncertainty \citep{mfp, trajectron++}. However, common metrics used in probabilistic trajectory predictions works, such as minimum average displacement of 6 samples (minADE) , do not fully reflect the quality of probabilistic forecasts. A probabilistic prediction should be \textit{calibrated} and \textit{sharp}; that is, the predicted distribution must cover likely future scenarios without being so broad and uncertain as to be useless.

\begin{wrapfigure}{r}{0.5\textwidth}
\centering
\subfigure[Original scene]{\label{fig:noneq0}
  \includegraphics[width=.45\textwidth]{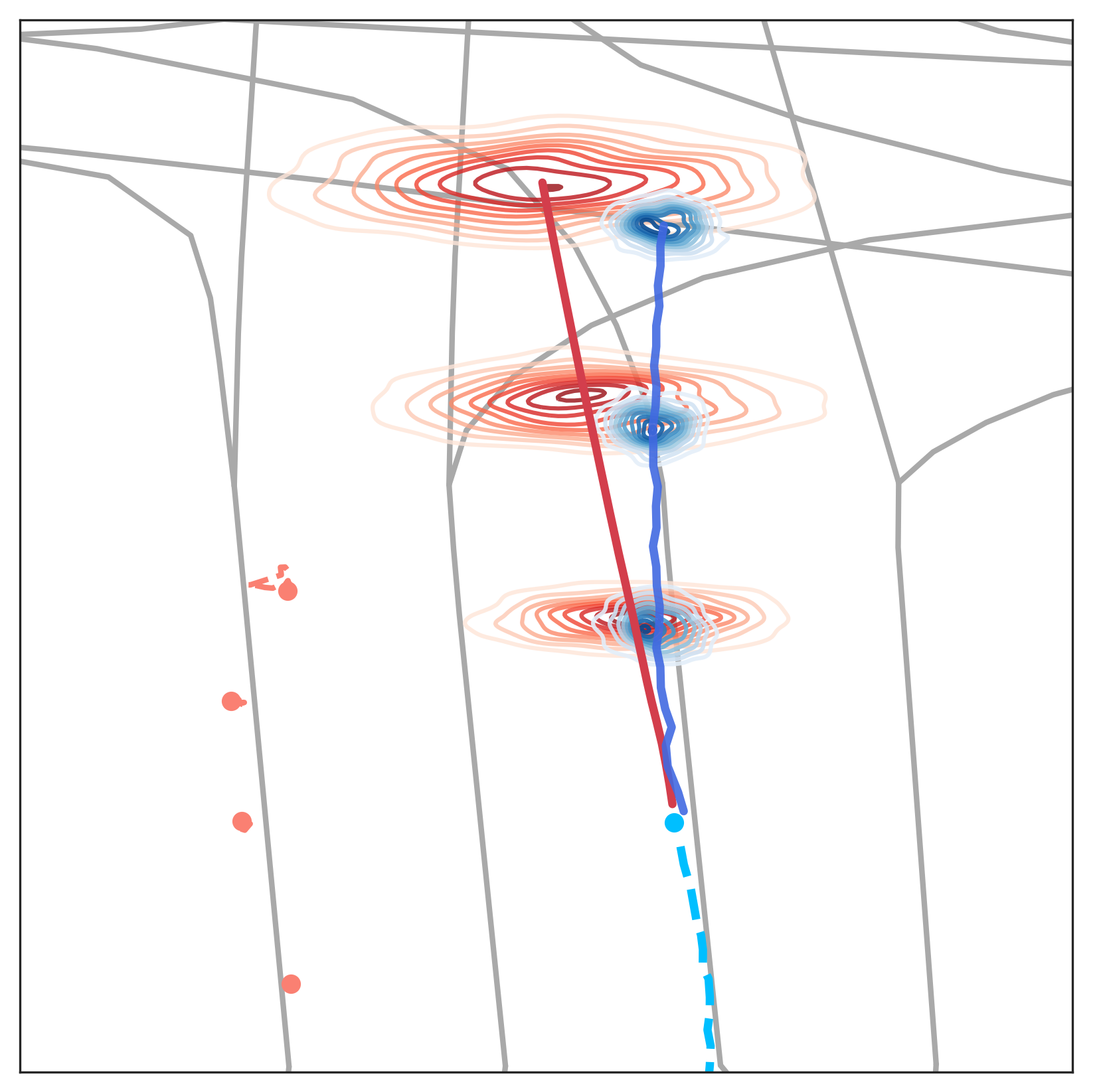}}
\subfigure[Rotated scene]{\label{fig:eq0}
  \includegraphics[width=.45\textwidth]{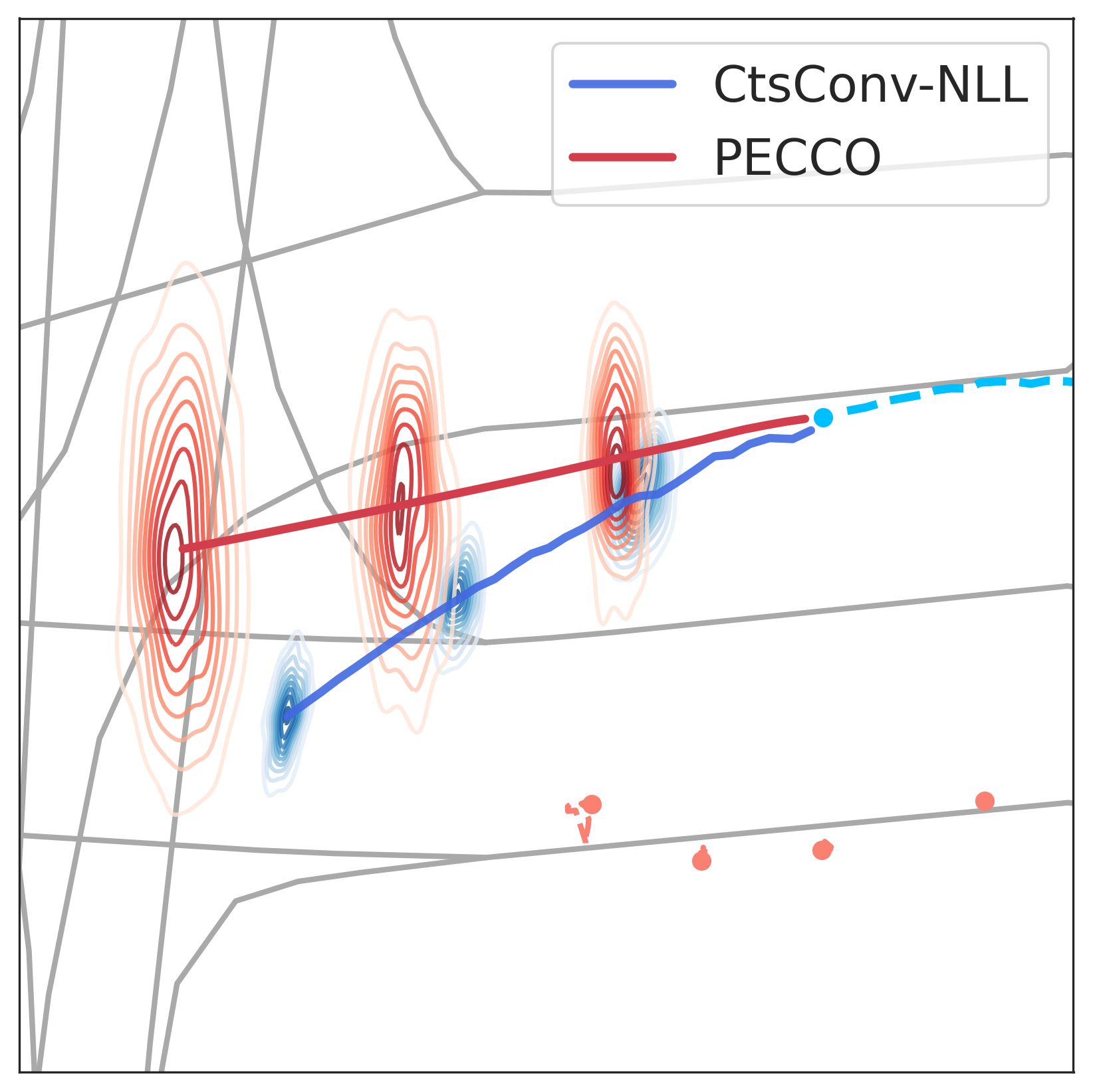}}
\caption{Prediction on the same scene rotated by 90 degrees. PECCO is consistent in trajectory and uncertainty prediction, whereas the non-equivariant  model (CtsConv) fails.}
\label{fig:rotpred}
\end{wrapfigure}

In this paper, we propose a \textbf{P}robabilistic \textbf{E}quivariant \textbf{C}ontinuous \textbf{CO}nvolutional model (PECCO). PECCO is an equivariant probabilistic trajectory prediction model. Our key insight is to exploit \textit{symmetry} to estimate multidimensional conditional distributions with limited data.  We assume the predicted probability distribution is rotation and translation equivariant. That is, if the input data is transformed, the probability distribution will also be likewise transformed. In Figure \ref{fig:rotpred}, we see the same car approaching an intersection from either the south or east. The scenes are related by a $\pi/2$ rotation.  As the absolute compass direction is not particularly meaningful for local trajectory prediction, the model should thus output the same probability distributions over future trajectories for the car coming from the east as that coming from the north, but rotated by $\pi/2$.  Rotational equivariance not only allows our model to produce physically consistent predictions, the multiplicative nature of equivariance also allows us to model a probability space with a smaller sample size \citep{JMLR:v21:19-322}. For each sample which an equivariant model is trained on, an equivariant model learns as if it were trained on all transformations of that sample by the symmetry group \citep{wang2020incorporating}.

PECCO also mitigates issues posed by other methods for enforcing equivariance such as data augmentation and normalization. Data augmentation adds rotated versions of data samples to the training dataset such that the model learns rotational equivariance. However, this slows training drastically, requires greater model capacity, and rarely achieves the level of equivariance or accuracy as equivariant neural networks \citep{trajectron++}. Data normalization is a technique that rotates the scene to the agent's reference frame for each data sample, as in \cite{vectornet}.  However, in the multi-agent setting, it is impossible to rotate the scene for multiple agents without a canonical reference frame. PECCO allows the weights to be relative to the local orientation of each agent without the need to rotate the scene repeatedly.

Our main contributions are two folds: (1) We design an equivariant neural network, PECCO, for probabilistic forecast of multi-agent dynamics, and (2) we demonstrate that by incorporating symmetry, PECCO improves both calibration and sharpness of probabilistic forecasts on a synthetic particle dataset and two real-world benchmark datasets.

\section{Related Work}
\paragraph{Trajectory Prediction.} 
%
Multi-agent trajectory forecasting has been extensively studied, 
approaches ranged from Kalman filters \citep{kalman1960} to non-linear Gaussian Process Regression models \citep{gaussianprocess}. However, these methods either rely on strong assumptions of the dynamics, or do not explicitly model multi-agent interactions. We refer readers to \cite{trajpredsurvey} for a comprehensive survey of such methods. 
Advancements in deep learning have allowed  flexible modeling of trajectory dynamics \citep{sociallstm, ctsconv, convsocialpooling, sophie, laneGCN, ecco, vectornet, simplex}, but they focus mainly on point estimation without uncertainty.



Recent methods have shifted to predicting distributions of  future trajectories, capturing uncertainty in dynamics. There are two main categories for probabilistic forecasting: (1) \textit{explicitly} via exact likelihood \citep{mfp, multipath,densetnt} and variational inference \citep{CVAE, trajectron++, desire}, or (2) \textit{implicitly} with Generative Adversarial Networks (GANs) \citep{socialgan,liu2019naomi}. Our work falls into the first category where we model the distributions parametrically. Parametric models allow us to evaluate the likelihood of future trajectories, which are useful for downstream planning tasks \citep{multipath, schwarting2018planning}.

Despite the development in probabilistic modeling, there is no standard metric for quantifying uncertainty of the prediction. Negative log likelihood often overfits the distribution \citep{guo2017calibration}, and best-of-n-sample results do not evaluate the full distribution \citep{rethinking}. We argue that probabilistic forecasts should accurately reflect the uncertainty in the model predictions. We propose using proper scoring rules such as Energy Score or mean interval score \citep{properscoring} for evaluating probabilistic forecasts. 


\paragraph{Equivariant Deep Learning.}
Geometric deep learning that leverages invariance and symmetries has found wide applications in areas ranging from image recognition \citep{bao2019equivariant, worrall2017harmonic, weiler2019e2cnn} to reinforcement learning \citep{equimdp}. Equivariant neural networks are studied for modeling dynamics as well - \cite{fuchs2020se} use $\mathrm{SE}(3)$-equivariant transformers to predict trajectories for a small number of particles as a regression task, and \cite{ecco} proposed a $\mathrm{S0}(2)$ equivariant continuous convolution for traffic trajectory prediction. All the methods mentioned above are deterministic. \cite{kohler2020equivariant} and \cite{ennormalizingflows} studies equivariant normalizing flows for modeling symmetric densities, however their domains focus on generative modeling and therefore differ from our work significantly. To our knowledge, no previous work has studied equivariant neural networks for probabilistic dynamics forecasting. 

\paragraph{Uncertainty Quantification (UQ).}
Uncertainty quantification is critical for risk assessment in safety-critical domains. Properly quantified uncertainties can be used to create probabilistic constraints and generate more robust planning and control strategies \citep{ostafew2016learning, bujarbaruah2019adaptive}. With the increasing use of deep learning in forecasting tasks, many works have UQ for neural networks \citep{saftyassurance, wu2021quantifying, guo2017calibration}. \cite{2021conformal} proposes a conformal prediction algorithm for 1D RNN forecasters with a prediction region with coverage guarantees. However, these works focus only on classification or 1-dimensional forecasts. \cite{salinas2019high, salinas2020deepar} use autoregressive RNNs for probabilistic forecasting of multiple time series, however, their method cannot explicitly model spatial relations. We present a model design for multi-agent dynamics and produce probabilistic distributions with better calibration.

\section{Background}
We give a short background on  symmetry and equivariance and their probabilistic extension.
\paragraph{Symmetry and Equivariance.}\label{subsec:symmetry}
A \emph{symmetry group} $G$  is a set together with a composition operation $\circ \colon G \times G \to G$ which is associative and has an identity and inverses. The group $G$ can transform a vector space $V$ by specifying a \emph{representation} which is a mapping $\rho \colon G \to \mathrm{GL}_n(V)$ sending each element of the group $G$ to an invertible $n \times n$ matrix such that $\rho(g_1 \circ g_2) = \rho(g_1) \rho(g_2)$.  

Given a function $f \colon X \to Y$ such that $G$ has representations $\rho_X$ and $\rho_Y$ acting on $X$ and $Y$ respectively, we say $f$ is $G$-\emph{equivariant} if for all $x \in X$ and $g \in G$, we have $\rho_Y(g)f(x) = f(\rho_X(g) x )$.  That is, a transformation of the input of $x$ induces a corresponding transformation of the output.  \emph{Invariance} for the function $f$ is a special case in which $\rho_Y(g)y = y$. 



\paragraph{\texorpdfstring{$\mathbf{SO(2)}$}{SO(2)} Equivariant Continuous Convolution.}
Continuous convolution \citep{ummenhofer2019lagrangian} generalizes discrete convolution. The feature vector $\mathbf{f}^{(i)} \in \mathbb{R}^{c_\mathrm{in}}$ of particle $i$ forms a vector field $\mathbf{f}$, and the kernel of the convolution $K: \mathbb{R}^2 \mapsto \mathbb{R}^{c_\mathrm{out}\times c_\mathrm{in}}$ forms a matrix field: for each point $\mathbf{x} \in \mathbb{R}^2$, $K(\mathbf{x})$ is a $c_\mathrm{out}\times c_\mathrm{in}$ matrix. The continuous convolution is then defined by 
\[
 \mathbf{g}^{(i)} = \sum_j a(\|\mathbf{x}^{(j)} - \mathbf{x}^{(i)}\|) K(\mathbf{x}^{(j)} - \mathbf{x}^{(i)}) \cdot \mathbf{f}^{(j)}.
\]
By \cite{weiler2019e2cnn}, this is $\SOtwo$-equivariant if 
$
K(gv) = \rho_\mathrm{out}(g) K(v) \rho_\mathrm{in}(g^{-1})$.

ECCO \citep{ecco} defines the convolution kernel $K$ in polar coordinates $K(\theta, r)$. Let $\mathbb{R}^{c_\mathrm{in}}$ and $\mathbb{R}^{c_\mathrm{out}}$ be $\SOtwo$-representations $\rho_\mathrm{in}$ and $\rho_\mathrm{out}$ respectively, then the convolution kernel satisfies the equivariance condition as follows, making the continuous convolution $\SOtwo$-equivariant.
\[
 K(\theta + \phi, r) = \rho_\mathrm{out}(\mathrm{Rot}_\theta) K(\phi,r)\rho_\mathrm{in}(\mathrm{Rot}_\theta^{-1})
\]






\paragraph{Calibration and Sharpness of Probabilistic Prediction.} 
It is desirable for a probabilistic prediction to be both \textit{calibrated} and \textit{sharp}. A model is \textit{calibrated} when the predicted probability correspond to the true probability of an event. In forecasting, calibration is often measured with \textit{coverage}, the probability of ground truth ${Y}$ falls into prediction region of confidence $\Hat{C}^{1-\alpha}$, $1-\alpha \in [0,1]$. 
\begin{align}
    P(Y \in \Hat{C}^{1-\alpha}) \geq 1- \alpha
    \label{eq:coverage}
\end{align}
Note that one can always obtain a calibrated prediction region by setting $\Hat{C}^{1-\alpha}$ to be the entire $Y$ space. We introduce the concept of  \textit{Sharpness}, which refers to the size of the prediction region $|\Hat{C}^{1-\alpha}|$. The balance between calibration and sharpness can be measured by a class of metrics called proper scoring rules \citep{properscoring}.
\vskip -0.1 in

\section{PECCO}

\begin{figure}[ht!]
  \centering
  \includegraphics[width=1\linewidth]{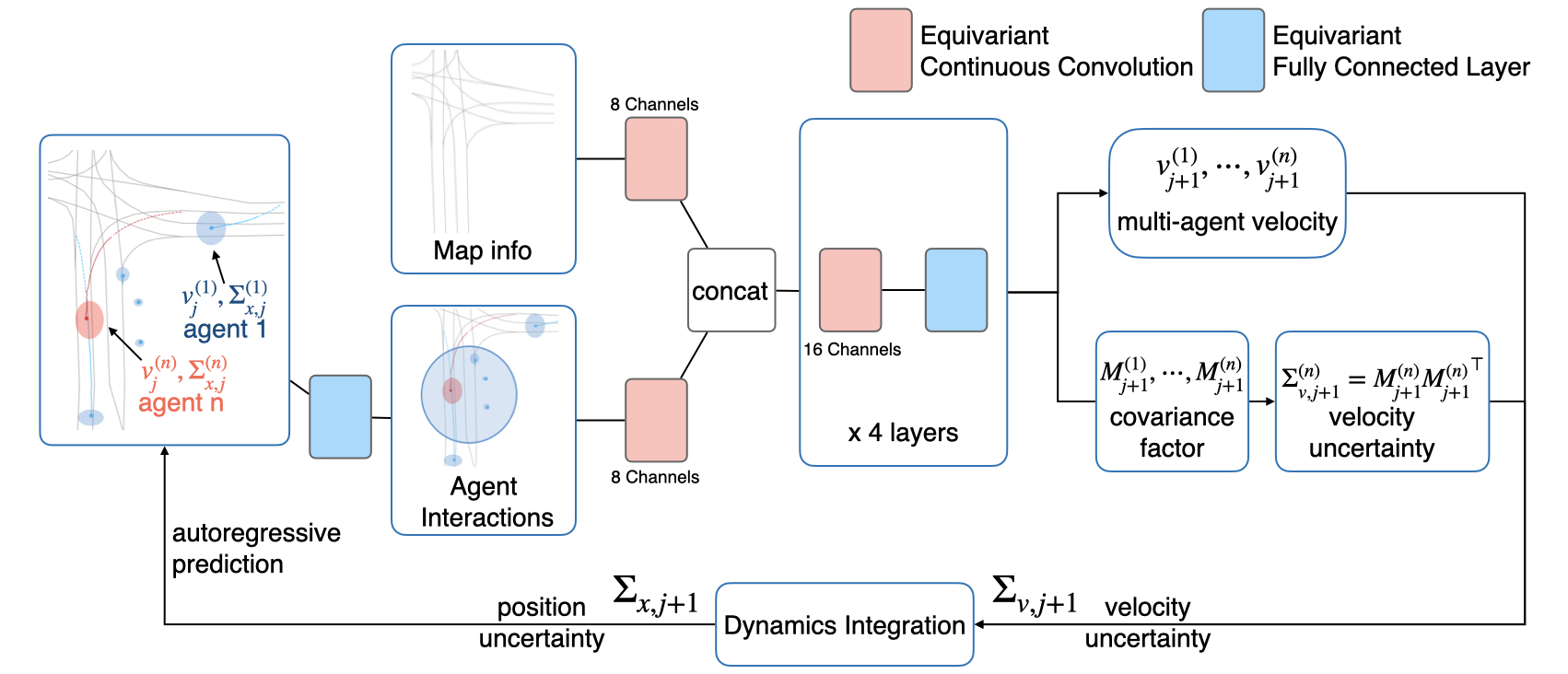}
  \caption{Overview of PECCO's model architecture. Agent trajectories consisting of velocities and position uncertainties are encoded along with map information by equivariant continuous convolution and fully connected layers. The model outputs $v_{j+1}$ and $\Sigma_{v,j+1}$ for all agents, which we use to calculate position uncertainty $\Sigma_{x,j+1}$ via dynamics. The model takes in the forecast and predicts autoregressively.} 
  \label{fig:model}
\end{figure}

\subsection{Problem Setup}
 Given  past trajectories of $n$ agents over $t$ time steps $\{x_{j}^{(1)}, x_{j}^{(2)}, \cdots, x_{j}^{(n)} \}_{j=1}^t $, where  ${x}^{(i)}_{j} \in \mathbb{R}^2$ , and the environmental context information $\mathbf{e}$ including marker positions of map lane boundaries, we model the probability distribution of agents' positions over  $k$ future time steps  as $p_\theta( {x}_{t+1:t+k} | {x}_{1:t}, \mathbf{e})$, with ${x}_j = ({x}^{(1)}_{j},\cdots, {x}^{(n)}_{j})$ being the positions of all agents at time step $j$. We introduce  PECCO, a deep learning model that leverages rotational equivariance to produce probabilistic forecasts.

The high-level architecture of our model is illustrated in Figure \ref{fig:model}. PECCO takes as input the positions of all agents ${x}_{1:t}$ in the past, and the covariance matrix $\Sigma_{x,j}$ at time $j$. It outputs the probability distribution of each agents' velocity as a 2-D Gaussian $\T{N}(\mu_{v,j+1}^{(i)}$, $\Sigma_{v,j+1}^{(i)})$ for the next time step. The velocity distribution is then integrated into a position distribution $\mathcal{N}$($\mu_{x, j+1}^{(i)}$, $\Sigma_{x, j+1}^{(i)}$) through dynamics integration. PECCO predicts the future $k$ timesteps autoregressively. The output distributions are guaranteed to be rotational equivariant by our model implementation.

\subsection{Probabilistic Symmetry through Equivariant Neural Networks} 

Rotational equivariance effectively reduces the dimension of the data space by placing different samples in the same equivalence class. This improves data coverage for better probabilistic modeling. 

Intuitively, real-world trajectory dynamics has intrinsic symmetry. That is, if past trajectories and environmental data such as the map is rotated, then the probability of a rotated future trajectory will be equally likely. We can model the probability density function $p_\theta$ as an invariant function of its inputs as in \autoref{eqn:invariant_prob}. Here, each past and future position $x^{(i)}_j \in \mathbb{R}^2$ is transformed according the standard representation $\rho_1$.  
\begin{equation} \label{eqn:invariant_prob}   
    p_\theta( {x}_{t+1:t+k} | {x}_{1:t}, \mathbf{e}) = p_\theta( g{x}_{t+1:t+k} | g{x}_{1:t}, g\mathbf{e}) \quad \forall g\in \SOtwo
\end{equation}


In order to implement \autoref{eqn:invariant_prob}, we 
assume future positions  follow a multivariate normal distribution ${x}^{(i)}_{j} \sim N(\mu^{(i)}_{x,j},\Sigma^{(i)}_{x,j})$. This is a common assumption in trajectory forecasting literature \citep{trajpredsurvey} and provides a convenient parametric form for optimization and reasoning. In the following expositions we omit the underscore $x$ for simplicity.

We aim to construct an equivariant neural network $f_\theta$ that outputs the parameters $\mu^{(i)}_{j}$ and $\Sigma^{(i)}_{j}$ autoregressively, taking as input probability distributions over the positions of all agents in the past $k$ time steps
\[
\mu_{j+1},\Sigma_{j+1} = f_\theta(\mu_{j-t+1:j}, \Sigma_{j-t+1:j}, \mathbf{e}). 
\]
where $\mu_{j} = (\mu_{j}^{(1)},\ldots,\mu_{j}^{(n)})$ and $\Sigma_j = (\Sigma_{j}^{(1)},\ldots,\Sigma_{j}^{(n)})$ and $\mathbf{e}$ denotes environmental information.  

In this case, the equivariance of $f_\theta$ leads to the desired invariance of $p_\theta$. This may be seen as a partial evaluation or currying of the conditional probability density function which has the effect of transforming invariance to equivariance. The following proposition relates equivariant networks with probabilistic symmetry. See Appendix \ref{app:equiv_dist} for a proof.
\begin{proposition}[One step equivariance implies $n$-step equivariance] \label{prop:equivtoinvar}
If the one-step probabilistic forecasting model $f_\theta$ is $G$-equivariant, then the probability distribution $p_\theta( x_{t+1:t+k} | x_{1:t}, \mathbf{e})$ is invariant as in \autoref{eqn:invariant_prob}.
\end{proposition}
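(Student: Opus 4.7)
The plan is to reduce the proposition to a single elementary fact about Gaussian densities under orthogonal transformations, then propagate that fact through the autoregressive unrolling of $f_\theta$. Concretely, I will show that each one-step Gaussian factor is invariant under the joint group action on the inputs and the sample, and conclude by taking a product.

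First, I factor the conditional joint distribution using the autoregressive structure described in Section~5.1:
\begin{equation*}
p_\theta(x_{t+1:t+k} \mid x_{1:t}, \mathbf{e}) = \prod_{j=t}^{t+k-1} \mathcal{N}\bigl(x_{j+1};\, \mu_{j+1}, \Sigma_{j+1}\bigr),
\end{equation*}
where $(\mu_{j+1}, \Sigma_{j+1}) = f_\theta(\mu_{j-t+1:j}, \Sigma_{j-t+1:j}, \mathbf{e})$ and the history is initialized deterministically by $\mu_{1:t} = x_{1:t}$, $\Sigma_{1:t} = 0$. Positions and means lie in the standard representation $\rho_1$ of $\SOtwo$, while each covariance matrix $\Sigma \in \mathbb{R}^{2\times 2}$ transforms as $g \cdot \Sigma = \rho_1(g)\, \Sigma\, \rho_1(g)^T$, i.e.\ in the tensor product representation $\rho_1 \otimes \rho_1$ required for $\Sigma$ to remain a valid covariance after group action.

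Second, I induct on $j$ to show that if the inputs $x_{1:t}, \mathbf{e}$ are acted on by some $g \in \SOtwo$, then $(\mu_{j+1}, \Sigma_{j+1})$ is transformed to $(\rho_1(g)\mu_{j+1},\, \rho_1(g)\Sigma_{j+1}\rho_1(g)^T)$. The base case comes from the initialization using the transformed history; the inductive step is exactly the $G$-equivariance of $f_\theta$ applied to the already-transformed past inputs. For each factor I then invoke the Gaussian identity
\begin{equation*}
\mathcal{N}(Rx;\, R\mu,\, R\Sigma R^T) \;=\; \mathcal{N}(x;\mu,\Sigma) \qquad \text{for any orthogonal } R = \rho_1(g),
\end{equation*}
which follows from $|\det R| = 1$ together with the invariance of the Mahalanobis form $(Rx-R\mu)^T (R\Sigma R^T)^{-1}(Rx-R\mu) = (x-\mu)^T \Sigma^{-1}(x-\mu)$. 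Multiplying invariant factors over $j$ yields the invariance in \autoref{eqn:invariant_prob}.

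The main obstacle will be the representation bookkeeping: one has to fix once and for all the representations carrying the $G$-action on $(\mu, \Sigma)$ (namely $\rho_1$ and $\rho_1 \otimes \rho_1$), and verify that the equivariance of $f_\theta$, which is phrased at the level of Gaussian parameters, actually delivers density-level invariance after the sample $x_{j+1}$ is also transformed by $\rho_1(g)$. Once this alignment is established, the analytic content reduces to the Gaussian identity above and a short induction, with no further computation required.
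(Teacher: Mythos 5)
Your proposal is correct and follows essentially the same route as the paper: autoregressively factor the conditional density into one-step Gaussian factors, propagate the group action through $f_\theta$ by induction using its equivariance, and invoke the Gaussian identity $p_{g\mu,\,g\Sigma g^T}(gv)=p_{\mu,\Sigma}(v)$ (the paper's Proposition~\ref{prop:equivnormal}) before multiplying the invariant factors. The only cosmetic difference is the initialization of the known history, where the paper uses $\Sigma_{1:t}=\eps\,\mathrm{Id}$ to keep the Gaussians non-degenerate while you use $\Sigma_{1:t}=0$; both choices are fixed by conjugation with rotations, so the argument is unaffected.
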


In order to enforce $\SOtwo$-equivariance for $f_\theta$, the following proposition describes how the mean and covariance matrix for a 2-D  Gaussian  transforms under a rotation of $\mathbb{R}^2$. 

\begin{proposition}[SO(2) equivariance of multivariate Gaussian]  \label{prop:equivnormal}
Given multivariate normal distribution $\mathcal{N}(\mu,\Sigma)$ over $\mathbb{R}^2$ with probability density function $p_{\mu, \Sigma}$ and $g\in \SOtwo$, then $\mathcal{N}(g\mu, g\Sigma g^T)$ is also a multivariate normal distribution and $p_{g\mu, g\Sigma g^T} (v) = p_{\mu, \Sigma} (g^{-1}v)$ for all $v \in \mathbb{R}^2$.
\end{proposition}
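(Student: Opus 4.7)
The plan is to prove the proposition by direct computation using the explicit formula for the Gaussian density together with the orthogonality property $g^{-1} = g^T$ and $\det g = 1$ that holds for every $g \in \SOtwo$.

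First I would verify that $g \Sigma g^T$ is a valid covariance matrix: symmetry is immediate from $(g\Sigma g^T)^T = g \Sigma^T g^T = g \Sigma g^T$, and positive definiteness follows since for any nonzero $w \in \mathbb{R}^2$ we have $w^T (g\Sigma g^T) w = (g^T w)^T \Sigma (g^T w) > 0$ because $g^T w \neq 0$ and $\Sigma$ is positive definite. This establishes that $\mathcal{N}(g\mu, g\Sigma g^T)$ is a well-defined multivariate normal distribution.

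Next I would compute the density $p_{g\mu, g\Sigma g^T}(v)$ and show it coincides with $p_{\mu,\Sigma}(g^{-1}v)$. The key observations are that $(g\Sigma g^T)^{-1} = g \Sigma^{-1} g^T$ (using $g^{-1} = g^T$) and $\det(g\Sigma g^T) = (\det g)^2 \det \Sigma = \det \Sigma$. The Mahalanobis exponent then simplifies as
\begin{equation*}
(v - g\mu)^T (g\Sigma g^T)^{-1} (v - g\mu) = (v - g\mu)^T g \Sigma^{-1} g^T (v - g\mu) = (g^{-1}v - \mu)^T \Sigma^{-1} (g^{-1}v - \mu),
\end{equation*}
where the last equality uses $g^T = g^{-1}$ and $g^T g = I$. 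Combined with the equality of the normalization constants coming from $\det(g \Sigma g^T) = \det \Sigma$, this yields $p_{g\mu, g\Sigma g^T}(v) = p_{\mu,\Sigma}(g^{-1}v)$, which is the claim.

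There is no genuine obstacle in this argument; the only thing to be careful about is consistent use of $g \in \SOtwo$ being orthogonal with unit determinant, which is precisely what allows both the quadratic form and the normalizing constant to transform cleanly. As a sanity check, the identity $p_{g\mu, g\Sigma g^T}(v) = p_{\mu,\Sigma}(g^{-1} v)$ is exactly the statement that if $X \sim \mathcal{N}(\mu,\Sigma)$ then $gX \sim \mathcal{N}(g\mu, g\Sigma g^T)$, which can alternatively be verified via the change of variables $v = gu$ whose Jacobian has unit determinant because $g$ is orthogonal.
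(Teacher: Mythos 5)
Your proof is correct and follows essentially the same route as the paper's: a direct computation with the explicit Gaussian density using $g^{-1}=g^T$ and $\det g = 1$ to show the quadratic form and normalizing constant transform as claimed, together with the same symmetry/positive-definiteness check that $g\Sigma g^T$ is a valid covariance. No substantive differences.
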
  

To ensure the covariance matrix of $f_\theta$ is positive-definite and symmetric, i.e. $\Sigma \in \mathrm{PosDefSym}_2(\mathbb{R})$, we make use of the following fact: 

\begin{proposition}[Equivariant maps constructing positive-definite symmetric matrices] \label{prop:pds}
The map
\begin{align}
    \varphi \colon \mathrm{GL}_2(\mathbb{R}) \to \mathrm{PosDefSym}_2(\mathbb{R}) , \quad 
    M \mapsto M M^T \label{eq:phi}
\end{align}
is surjective and equivariant. That is, for $g \in \SOtwo$ we have
\begin{align}
    \varphi(gM) &= g \varphi(M) g^{T} \label{eq:act_on_gl2}. 
\end{align}
Moreover, $\varphi$ admits a one-sided inverse which is also equivariant,
\begin{align}
        \psi \colon \mathrm{PosDefSym}_2(\mathbb{R}) \to \mathrm{GL}_2(\mathbb{R}), \quad    
        \Sigma \mapsto Q \Lambda^{\frac{1}{2}}
\end{align}
where $Q\Lambda Q^T$ is the eigendecomposition of $\Sigma$ and $Q$ is orthogonal. Together, $\varphi (\psi(\Sigma)) = \Sigma$.
\end{proposition}

As a consequence of Proposition \ref{prop:equivnormal} and Proposition \ref{prop:pds}, we can ensure that the predicted distribution transforms correctly under equivariance by (1) predicting an intermediate matrix $M$ for covariance, and (2) constraining $\tilde{f}_\theta$ to be equivariant with respect to the action in \autoref{eq:act_on_gl2}.
\begin{equation} \label{eq:ftheta}
\begin{aligned}    
    \mu_{j+1},M_{j+1} = \tilde{f}_\theta(\mu_{j-k:j}, M_{j-k:j}, \mathbf{e}), \;
    \Sigma_{j+1} = M_{j+1}M_{j+1}^T
\end{aligned}
\end{equation}
In this case, $\SOtwo$ acts by transforming the the columns of $M$ independently as vectors in $\mathbb{R}^2$.  Thus the data $(\mu_j^{(i)},\Sigma_j^{(i)})$ for each agent and time step is comprised of 3 copies of the standard representation $\rho_1$ as defined in Section \ref{subsec:symmetry}.  Given this $\SOtwo$ action, we can enforce $\SOtwo$-equivariance in the neural network $\tilde{f}_\theta$. Implementation details of  of the equivariant layers are provided in Appendix \ref{app:matrix}.




\subsection{Dynamics Integration (dyna)} 
\label{sec:dyn_inte}
Instead of predicting the position directly, PECCO outputs a Gaussian distribution over \emph{velocity} as $\mathcal{N}(\mu_{v,j}, \Sigma_{v,j})$. More specifically, it predicts $(\mu_{v,j}, M_{v,j})$ at each time step and the covariance matrix $\Sigma_{v,j}$ is calculated as in Proposition \ref{prop:pds}. However, we want to obtain the  uncertainty over \textit{position} as $\Sigma_{x,j}$ and perform autoregressive forecasting. We leverage dynamics integration to propagate the uncertainty from velocity to position.

Assuming that all agents in the system can be approximated as linear discrete time dynamics ${x}_{j+1} = {x}_j + \Delta t \cdot  {v}_j $, we can obtain the uncertainty of predicted position $\Sigma_{x, j+1}$ by 
\[ 
\Sigma_{x, j+1}
            = \Sigma_{x, j} + (\Delta t)^2 \, \Sigma_{v, j} + 2 \Delta t \cdot \text{cov}(x_j, v_j).
\]
We assume that the cross covariance matrix $\mathrm{cov}(x_j, v_j) $ is zero for simplicity 
following previous works \citep{trajectron++}. During training, gradients are calculated after entire trajectory is predicted autoregressively. A consequence of the additive uncertainty setup is that it enforces the uncertainty 
to grow monotonically over time, creating a ``cone of uncertainty".

\section{Experiments}

We show that our model produces accurate and more calibrated probabilistic forecast compared to baseline models on one synthetic and two real world trajectory prediction datasets: interacting particles, autonomous vehicle, and pedestrian movement. 
\vspace{-0.5em}
\subsection{Baselines}
\begin{itemize}
\itemsep0em 
   \item \texttt{LSTM-NLL} (variation of \cite{sociallstm}): An encoder-decoder LSTM model that predicts the mean and variance of a Gaussian distribution, optimizing likelihood of data. We also train a version with random rotation data augmentation \texttt{LSTM-aug}.
    \item \texttt{CtsConv} \citep{ummenhofer2019lagrangian}: Continuous convolution over point cloud data for trajectory prediction, a non-equivariant counterpart of PECCO. \texttt{CtsConv-aug} is trained with a data augmentation step where we randomly rotate the scenes. 
     \item \texttt{Multiple Futures Prediction (MFP)} \citep{mfp}: A encoder-decoder model for multimodal probabilistic trajectory forecasts.
     \item \texttt{Trajectron++} \citep{trajectron++}: State-of-the-art probabilistic trajectory prediction model with graph representation of agent interactions and conditional VAE architecture.
\end{itemize}
\vspace{-1em}
\subsection{Evaluation Metrics.}
\begin{itemize}
\itemsep0em 
\item \textit{Minimum Average/Final Displacement Error} (minADE$_6$, minFDE$_6$):  average $l_2$ displacement error over $k$ steps, or average displacement of the final step, between predicted and ground truth trajectories. We report the minimum over 6 samples for probabilistic models. 
\item \textit{Negative Log Likelihood} (NLL): NLL of ground truth trajectories under predicted distributions.
\item \textit{Energy Score} (ES) \citep{properscoring}: a proper scoring rule to measure calibration and sharpness of the predicted distribution $P$. The energy score for a distribution $P$ and the ground truth data $x$ is defined as: $\text{ES}(P, x) = E_{X \sim P} \| X - x \| - \frac{1}{2} E_{ X, X' \sim P} \| X - X' \| $. Here $X$ and $X'$ are independent samples from $P$. 
\item \textit{Coverage}: The empirical estimate of probability of the true value lying in the predicted interval, defined in  \eqref{eq:coverage}. We report the coverage of 90\% quantile of the predicted Gaussian. The  prediction is more \textit{calibrated} if the coverage is closer to 90\%.
\end{itemize}

\vspace{-1em}
\subsection{Datasets.}
The particle dataset is a synthetic dataset of 5 particles interacting in spring dynamics \citep{nri} with dynamics noise. The models predict 20 time steps given 30 steps as input.
The Argoverse autonomous vehicle motion forecasting \citep{argoverse} is a widely used vehicle trajectory prediction benchmark. The task is to predict 3 second trajectories based on all vehicle history in the past 2 seconds recorded at 10Hz. 
TrajNet++ \citep{pedestriandata} is a popular pedestrian trajectory benchmark with a focus on agent-agent interaction scenarios. The task is to predict 12 time steps for agents given 9. We refer the reader to Appendix \ref{app:exp} for data and training details. 

\vspace{-1em}
\subsection{Experimental Results}

\begin{wraptable}{r}{5.5cm}
\vspace{-1.0em}
\resizebox{\textwidth}{!}{
    \begin{tabular}{c|c c c}
     \toprule
  \textbf{Model} & MSE $\downarrow$ & NLL$\downarrow$  & ES  $\downarrow$\\ 
  \midrule
  LSTM & .016 &  -1.61 & 1.041 \\
  CtsConv & .010 &  -0.81  & 0.667 \\
  PECCO & \textbf{.004} & \textbf{-0.83} & \textbf{0.467} \\
    \bottomrule
    \end{tabular}
\caption{PECCO outperforms baseline models in all metrics on the synthetic particles dataset. }}
\label{tab:particles}
\vspace{-0.5em}
\end{wraptable}

\begin{table}[t!]
\centering   \resizebox{\textwidth}{!}{
\begin{tabular}{ c|c c c c | ccc } 
 \toprule
  \textbf{Model} & minADE${}_6 \downarrow$ & minFDE${}_6 \downarrow$ & NLL$\downarrow$ & ES $\downarrow$& Cov@1s(\%) & Cov@2s & Cov@3s\\ 
\midrule
 \multicolumn{8}{c}{Argoverse}\\
 \midrule
LSTM-NLL   & 1.64 \plusminus{.05} & 4.17 \plusminus{.10} & 3.07 \plusminus{.08} & 2.31 \plusminus{.54} & 8.8 \plusminus{0.7} & 8.5 \plusminus{0.7} & 7.0 \plusminus{0.8} \\
LSTM-NLL-aug    & 1.61 \plusminus{.02} & 4.15 \plusminus{.08} & 2.78 \plusminus{.03} & 1.99 \plusminus{.46} & 10.1 \plusminus{1.5} & 10.5 \plusminus{1.0} & 9.8 \plusminus{1.9} \\
CtsConv-NLL  & 1.74 \plusminus{.03} & 4.43 \plusminus{.06} & 29.1 \plusminus{2.2} & 6.71 \plusminus{.70} & 6.3 \plusminus{2.2} & 0.02 \plusminus{.01} & 0.01 \plusminus{.01}\\
CtsConv-NLL-aug  & 1.66 \plusminus{.02} & 4.23 \plusminus{.06} & 11.81 \plusminus{.01} & 5.10 \plusminus{.35} & 11.9 \plusminus{2.1} & 1.7 \plusminus{0.5} & 0.02 \plusminus{.01} \\
Trajectron++   & 1.83 \plusminus{.02} & 3.85 \plusminus{.07} & \textbf{2.48} \plusminus{.27} & 3.92  \plusminus{.61} & 45.5 \plusminus{5.3} & 37.6 \plusminus{3.2}  & 34.9 \plusminus{2.5} \\
MFP  & 1.53 \plusminus{.04} & 3.77 \plusminus{.06} & 3.56 \plusminus{.02} & 2.33 \plusminus{.21} & 51.3 \plusminus{5.1} & 33.0 \plusminus{4.9} & 8.3 \plusminus{4.8} \\
\midrule
PECCO   & \textbf{1.39} \plusminus{.02}  & \textbf{3.41} \plusminus{.03} & 4.26 \plusminus{0.1} & \textbf{1.54} \plusminus{.16} & \textbf{74.9} \plusminus{0.6} & \textbf{78.6} \plusminus{2.8} & \textbf{84.5} \plusminus{2.9}\\
\toprule
 \multicolumn{8}{c}{TrajNet++}\\
  \midrule
LSTM-NLL-aug & 0.85 \plusminus{.02} & 1.64 \plusminus{.03} & 2.78 \plusminus{.02} & -0.28 \plusminus{.09}  & 29.0 \plusminus{4.3} & 23.2 \plusminus{4.2} & 23.7\plusminus{3.9} \\
CtsCov-NLL & 1.08 \plusminus{.02} & 2.36 \plusminus{.09} & 5.33 \plusminus{.08} & 1.67 \plusminus{.13}  & 43.8 \plusminus{10.6}  & 20.7 \plusminus{5.2} & 12.2 \plusminus{6.7} \\
CtsCov-NLL-aug &  0.92 \plusminus{.01 } & 1.76 \plusminus{.03} & 6.74 \plusminus{.21} & 1.42 \plusminus{.11}  & 62.1 \plusminus{3.3}  & 36.3 \plusminus{4.9} & 34.1 \plusminus{5.8} \\
Trajectron++  & 1.14 \plusminus{.03} & 2.31 \plusminus{.05} & 2.83 \plusminus{.12} & 0.98 \plusminus{.17} & 50.2 \plusminus{2.2} & 45.8 \plusminus{3.5} & 32.9 \plusminus{3.5}\\
MFP & 0.85 \plusminus{.02}  & 1.70 \plusminus{.04}  & \textbf{2.20} \plusminus{.04}  & 0.67 \plusminus{.08} & 79.1 \plusminus{4.3}  & 32.5 \plusminus{3.1}  & 22.8 \plusminus{3.2} \\
\midrule
PECCO & \textbf{0.59} \plusminus{.12}  & \textbf{1.06} \plusminus{.17} & 2.37 \plusminus{.04} & \textbf{-0.73} \plusminus{.10} &  \textbf{80.8} \plusminus{4.5} &  \textbf{85.9} \plusminus{2.3} &  \textbf{94.5} \plusminus{3.0} \\
\bottomrule
\end{tabular}
}
\caption{Performance comparison on benchmark datasets Argoverse and TrajNet++. Cov@$k$s(\%) refers to coverage at the $k$ second mark; prediction is more \textit{calibrated} if closer to 90\%. PECCO is more accurate and calibrated compared to non-equivariant baseline models.} 
\label{tab:big-table}
\end{table}
\begin{figure}
    \centering
    \subfigure[LSTM]{\label{fig:noneq0}
      \includegraphics[width=.32\textwidth]{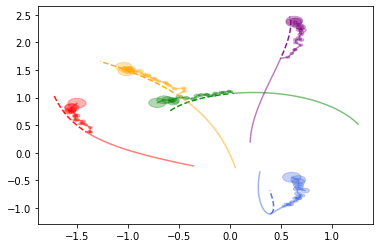}}
    \subfigure[CstConv]{\label{fig:eq0}
      \includegraphics[width=.32\textwidth]{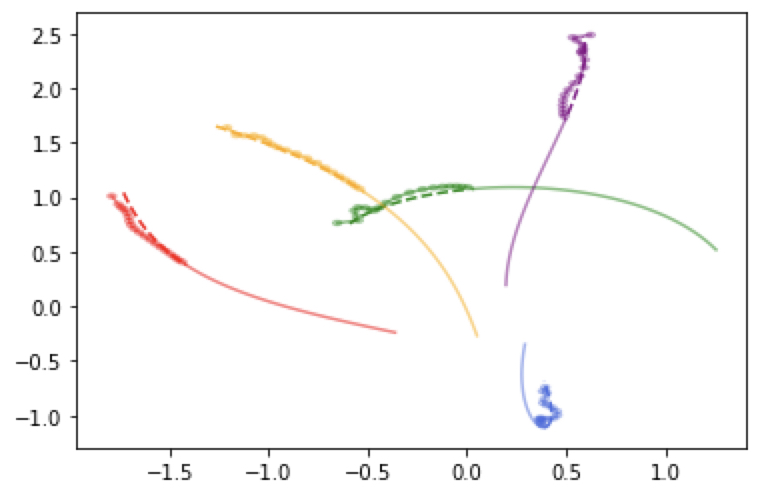}}
          \subfigure[PECCO]{\label{fig:eq0}
      \includegraphics[width=.32\textwidth]{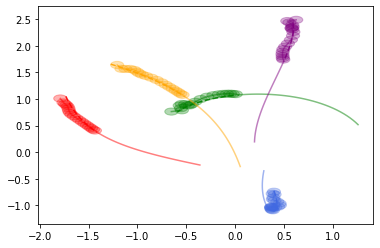}}
    \caption{Comparison of prediction results between baselines. The solid lines are input timesteps to the models, the dotted lines ground truth, and the circles the 90\% confidence regions. We can see that PECCO achieves accurate results while maintaining good coverage.}
    \label{fig:particle}
\end{figure}

\textbf{Particles dataset.} We present experimental result for the synthetic particle dataset in Table 1. We visualize a test sample in \autoref{fig:particle} to qualitatively illustrate PECCO's improved accuracy and calibration. 

\noindent \textbf{Argoverse and TrajNet++.}
\autoref{tab:big-table} presents experimental results on two benchmark datasets. PECCO achieves better regression accuracy compared to non-equivariant baseline in terms of minADE and minFDE, with a notable 9\% improvement in minADE over the the best performing baseline, MFP. PECCO's improved probability coverage allows for more diverse sampling and hence can produce trajectories closer to ground truth. 

PECCO's probabilistic predictions are able to achieve consistently better coverage compared to other methods whose coverage deteriorates over time. Comparing LSTM-NLL and CtsConv-NLL with their augmented counterparts, LSTM-NLL-aug and CtsConv-NLL-aug, we can see that data augmentation through rotation improves both accuracy and calibration. PECCO leverages this symmetry to improve accuracy, maintain good calibration, and converge faster (\autoref{fig:dataeff}).

Figure \ref{fig:arg-experiment} visualizes a typical situation to illustrate this difference. We plot the predicted distribution at 10, 20, and 30 time steps of prediction (1 time step is 0.1 seconds). We can see the probable region predicted by LSTM-NLL explodes at timesteps 20 and 30, whereas CtsConv-NLL and MFP tend to be overconfident in their predictions. PECCO is able to predict a Gaussian that covers both cases of staying in the lane and changing to the right lane.

\begin{figure}
    \subfigure[LSTM-NLL]{
  \includegraphics[width=.23\textwidth]{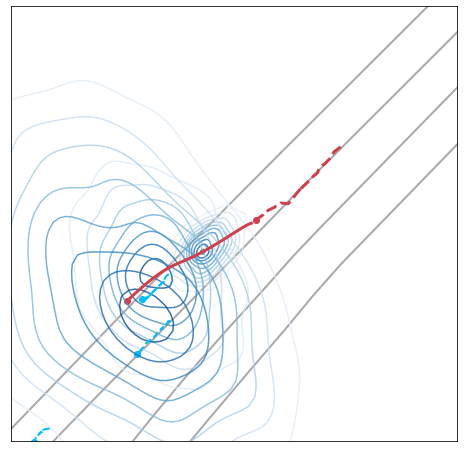}
  \label{fig:sub1}}
\subfigure[CtsConv-NLL]{
  \includegraphics[width=.23\textwidth]{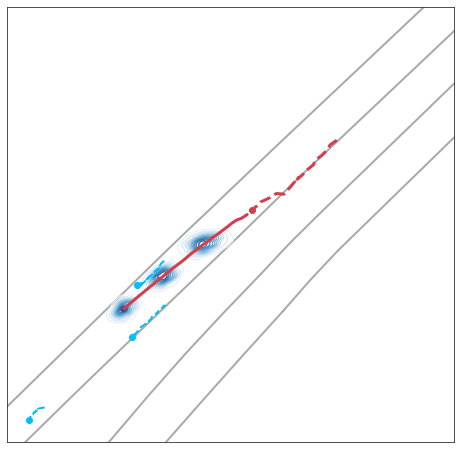}
  \label{fig:sub3}}
  \subfigure[MFP]{
  \includegraphics[width=.23\textwidth]{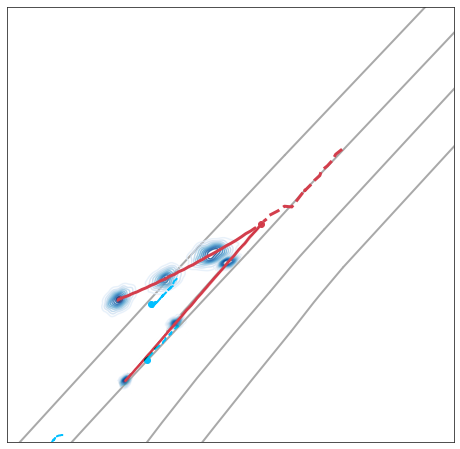}
  \label{fig:sub3}}
\subfigure[PECCO]{
  \includegraphics[width=.23\textwidth]{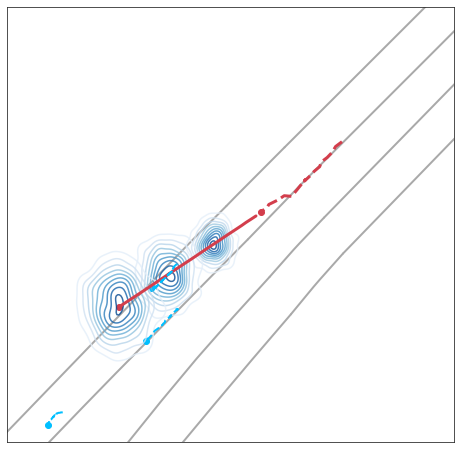}
  \label{fig:sub4}}
    \caption{Comparison of uncertainty predicted at a lane change. Red trajectories are the agent of interest in the same scenario. Note how the LSTM predicted uncertainty explodes after a few time steps, while CtsConv and MFP has overconfident distributions. PECCO is able to model possibility of both staying and lane change.}
    \label{fig:arg-experiment}
\end{figure}

\begin{figure}
    \centering
      \includegraphics[width=.9\textwidth]{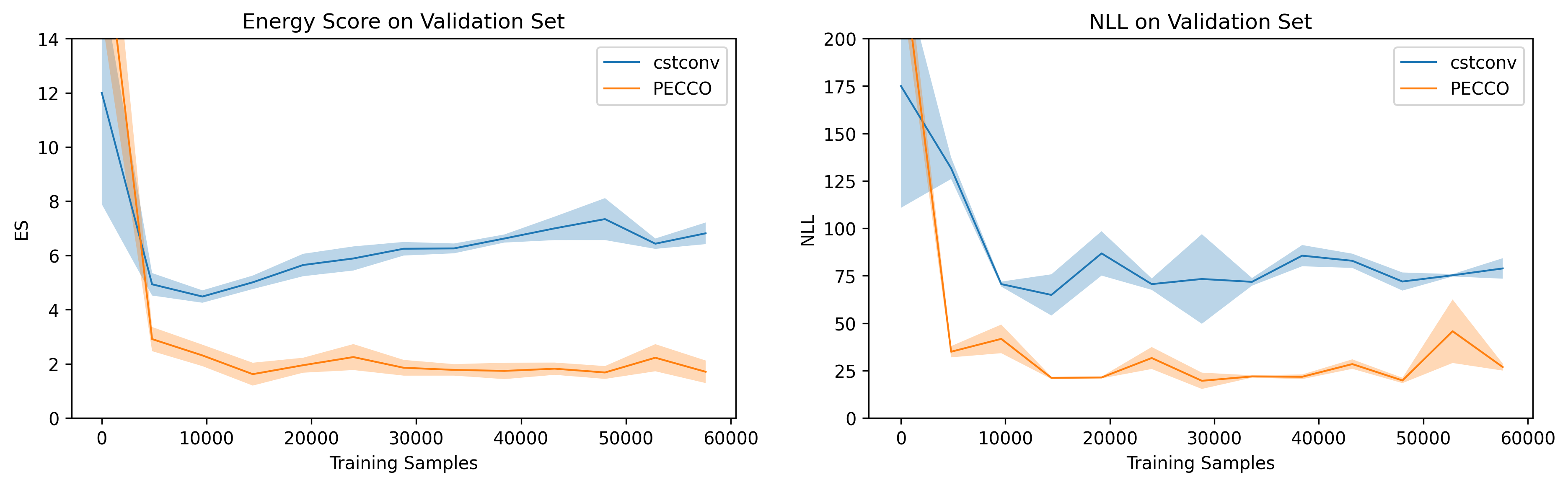}
    \caption{With equivariance, PECCO is able to achieve better energy score and NLL with fewer data samples, compared to its non-equivariant counterpart.}
    \label{fig:dataeff}
\end{figure}

\begin{table*}[h]
\centering\resizebox{\textwidth}{!}{
\begin{tabular}{ c|c c c| ccc } 
 \toprule
  \textbf{Model} & minADE${}_6 \downarrow$ & minFDE${}_6 \downarrow$ & MRS $\downarrow$ & Cov@1s(\%) & Cov@2s & Cov@3s\\ 
\midrule
Conformal LSTM & 2.45 \plusminus{0.09}  & 4.68 \plusminus{0.15} & 198.1 \plusminus{12.0} & 90.1 \plusminus{0.1} & 92.7 \plusminus{0.1} & 92.8 \plusminus{0.1}\\
Conformal ECCO & 1.96 \plusminus{0.06} & 4.32 \plusminus{0.10} & 220.9 \plusminus{8.1} & \textbf{90.0} \plusminus{0.1} & \textbf{90.1} \plusminus{0.1} & \textbf{90.0} \plusminus{0.1} \\
\midrule
PECCO   & \textbf{1.39} \plusminus{.02}  & \textbf{3.41} \plusminus{0.03} & \textbf{8.52} \plusminus{0.16} & 74.9 \plusminus{0.2} & 93.6 \plusminus{0.8} & 92.5 \plusminus{0.9}\\
\bottomrule
\end{tabular}}
\caption{Comparison with conformal prediction methods on Argoverse dataset. PECCO produces a parametric distribution with a tighter confidence region (small MRS), whereby achieving better regression accuracy while maintaining competitive coverage.}
\label{tab:conformal}
\end{table*}

\subsection{Model Analysis}
%
\paragraph{Comparison with data augmentation and cannonicalization.}
Data augmentation and cannonicalization are popular methods to implicitly exploit  symmetry in trajectory data. In Table 4 we compare PECCO to its nonequivariant counterparts with the addition of data augmentation and cannonicalization. Equivariance dramatically improves prediction performance in all aspects, especially calibration.

\begin{wraptable}{r}{7.5cm}
\vspace{-1.0em}
\centering \resizebox{\textwidth}{!}{
\begin{tabular}{ c | c c c c } 
 \toprule
  & minA/FDE$_6$ & NLL & Cov(\%) \\ 
 \midrule
  CtsConv & 1.74 / 4.43  & 29.13 & 2.2 \\
+cannon & 1.66 / 4.28 & 17.46  &  4.5  \\
+aug & 1.67 / 4.23  & 11.81  &  17.1 \\
equivariant & \textbf{1.39} / \textbf{3.41} & \textbf{4.26}  & \textbf{87.5} \\
\bottomrule
 \end{tabular}
\caption{Data Augmentation Comparison on the Argoverse Dataset.}}
\label{tab:aug-ablation}
\end{wraptable}

\vspace{-0.5em}
\paragraph{Dynamics Integration Ablation.}
Dynamic integration (dyna) introduced in Section \ref{sec:dyn_inte} enforces the uncertainty to grow monotonically over time. As an ablative study, Table 5 show that PECCO with  dynamic integration has much better calibration compared to if without; for the Argoverse dataset, the improved calibration also informs better performance on minADE/FDE.

\begin{wraptable}{r}{7.5cm}
\vspace{-1.0em}
\centering \resizebox{\textwidth}{!}{
\begin{tabular}{ c|c c c  } 
\toprule
\textbf{Argoverse} & minA/FDE$_6$ & NLL & Cov(\%) \\ 
\midrule
no-dyna & 1.52 / 3.76 & 9.72  & 38.6 \\
dyna & \textbf{1.39} / \textbf{3.41} & \textbf{4.26}  &  \textbf{87.5} \\
\toprule
\textbf{Pedestrian} & minA/FDE$_6$ & NLL & Cov(\%) \\ 
\midrule
no-dyna & \textbf{0.72} / 2.12 & 4.71 & 39.6  \\
dyna & 0.73 / \textbf{1.98} & \textbf{2.37}  & \textbf{83.7} \\
\bottomrule
\end{tabular}
\caption{Dynamics Integration (dyna) Ablation: using dyna encourages the uncertainty to grow over time and improves coverage.}}
\label{tab:dynamics-ablation}
\end{wraptable}

\vspace{-0.5em}

\paragraph{Comparison with Conformal Prediction.}
We compare to two conformal prediction baselines \citep{2021conformal} (Appendix \ref{app:conformal} for details) in Table 3. Conformal methods achieve guaranteed $\geq 90\%$ coverage, but suffer in prediction accuracy due to having to split training data for calibration. Since conformal prediction does not output a distribution, we use another proper scoring rule Mean Regional Score (2d extension of mean interval score in \cite{properscoring}, see Appendix B) as metric. The conformal regions are much larger (higher MRS), which is less desirable for downstream decision making tasks.
\section{Conclusion}
In this work we propose Probabilistic Equivariant Continuous Convolution (PECCO), a novel  multi-agent probabilistic prediction method for improving uncertainty quantification. We design an equivariant neural network under which the predicted distributions transform correspondingly as inputs are transformed. We introduce the Energy Score metric to bring attention to the calibration of multivariate probabilistic forecasts. By leveraging equivariance, PECCO produces more accurate and calibrated probabilistic forecasts compared to existing methods on both synthetic and real-world datasets.

\acks{This work was supported in part by U.S. Department Of Energy, Office of Science, Facebook Data Science Research Awards, U. S. Army Research Office under Grant W911NF-20-1-0334, and NSF Grants \#2134274 and \#2146343. }

\bibliography{ref.bib}
\pagebreak
\appendix

\section{Proofs}
\label{app:equiv_dist}

\subsection{Proofs for Equivariance of Gaussians [Proposition \ref{prop:equivnormal}]}

In this section, we prove that the Gaussian probability density function is $\SOtwo$-equivariant.

\begin{prop*}
 Given multivariate normal distribution $\mathcal{N}(\mu,\Sigma)$ over $\mathbb{R}^2$ with probability density function $p_{\mu, \Sigma}$ and $g\in \SOtwo$, then $p_{g\mu, g\Sigma g^T} (v) = p_{\mu, \Sigma} (g^{-1}v)$ for all $v \in \mathbb{R}^2$.
\end{prop*}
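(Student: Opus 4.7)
The plan is to prove the identity by direct computation, substituting the explicit Gaussian density formula into both sides and using the defining properties of $\SOtwo$. The key facts I will use are that any $g \in \SOtwo$ is orthogonal (so $g^{-1} = g^T$) and has determinant $1$, which together make the computation almost mechanical.

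First I would verify that $g\Sigma g^T$ is still a valid covariance matrix, i.e.\ positive-definite and symmetric. Symmetry is immediate: $(g\Sigma g^T)^T = g \Sigma^T g^T = g \Sigma g^T$. Positive-definiteness follows from the congruence: for $w \neq 0$, $w^T(g\Sigma g^T) w = (g^T w)^T \Sigma (g^T w) > 0$ since $g^T w \neq 0$ and $\Sigma$ is positive-definite. So $\mathcal{N}(g\mu, g\Sigma g^T)$ really is a Gaussian.

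Next I would check that the normalization constants agree. The density is
\[
p_{\mu, \Sigma}(v) = \frac{1}{2\pi \sqrt{|\Sigma|}} \exp\!\left(-\tfrac{1}{2}(v-\mu)^T \Sigma^{-1} (v-\mu)\right),
\]
and $|g\Sigma g^T| = |g|^2 |\Sigma| = |\Sigma|$ since $|g| = 1$. So the prefactors on the two sides match.

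The core of the argument is then to simplify the quadratic form. Using $(g\Sigma g^T)^{-1} = g^{-T} \Sigma^{-1} g^{-1} = g \Sigma^{-1} g^T$ and $g^{-1} = g^T$, the exponent on the left-hand side becomes
\[
-\tfrac{1}{2}(v - g\mu)^T g \Sigma^{-1} g^T (v - g\mu) = -\tfrac{1}{2}\bigl(g^T v - g^T g\mu\bigr)^T \Sigma^{-1}\bigl(g^T v - g^T g\mu\bigr) = -\tfrac{1}{2}(g^{-1}v - \mu)^T \Sigma^{-1}(g^{-1}v - \mu),
\]
which is precisely the exponent of $p_{\mu,\Sigma}(g^{-1}v)$. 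Combined with the matching prefactors, this yields $p_{g\mu, g\Sigma g^T}(v) = p_{\mu,\Sigma}(g^{-1}v)$. There is no real obstacle here; the whole statement is essentially a change-of-variables identity, and the only subtlety worth flagging is the need to use $g^{-1} = g^T$ (orthogonality) rather than just invertibility, which is what makes the factor $g^T g$ collapse cleanly in the middle step.
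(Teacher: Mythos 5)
Your proposal is correct and follows essentially the same route as the paper: a direct computation with the explicit Gaussian density, matching the normalization via $\det(g)=1$ and collapsing the quadratic form by factoring $v-g\mu = g(g^{-1}v-\mu)$, together with the same separate check that $g\Sigma g^T$ is symmetric positive-definite (which the paper states as its own short proposition). The only cosmetic difference is that you invoke orthogonality $g^{-1}=g^T$ explicitly, while the paper's cancellation $g^Tg^{-T}=I$ works for any invertible $g$; this does not change the substance of the argument.
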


\begin{proof}
\begin{align*}
P_{g\mu, g\Sigma g^T}(v) &= \frac{1}{2 \pi \det{(g\Sigma g^T)}} \exp{(-\frac{1}{2}(v-g\mu)^T (g \Sigma g^T)^{-1} (v - g\mu))} \\
 &= \frac{1}{2\pi \det{(\Sigma})} \exp{(-\frac{1}{2}(g^{-1}v-\mu)^T g^T g^{-T} \Sigma^{-1} g^{-1} g (g^{-1}v -\mu))} \\
&= \frac{1}{2\pi \det{(\Sigma})} \exp{(-\frac{1}{2}(g^{-1}v-\mu)^T \Sigma^{-1} (g^{-1}v -\mu))}\\ 
&= P_{\mu, \Sigma}(g^{-1}v) 
\end{align*}
\end{proof}

\begin{prop*} 
Given a normal distribution $\mathcal{N}(\mu,\Sigma)$, its group-actioned $\mathcal{N}(g\mu, g\Sigma g^T)$ is also a valid normal distribution. 
\end{prop*}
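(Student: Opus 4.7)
The plan is to verify the two defining properties a pair $(\tilde\mu, \tilde\Sigma)$ must satisfy in order for $\mathcal{N}(\tilde\mu, \tilde\Sigma)$ to be a valid $2$-dimensional Gaussian: namely that $\tilde\mu \in \mathbb{R}^2$ and that $\tilde\Sigma$ is symmetric and positive definite. The first is immediate, since $g \in \SOtwo$ acts on $\mathbb{R}^2$ by a $2\times 2$ invertible matrix, so $g\mu \in \mathbb{R}^2$. The nontrivial content is therefore checking that $g\Sigma g^T$ is again a covariance matrix.

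For symmetry, I would simply take the transpose: $(g\Sigma g^T)^T = g\Sigma^T g^T = g\Sigma g^T$, where the last equality uses symmetry of $\Sigma$. For positive definiteness, I would pick an arbitrary nonzero $v\in\mathbb{R}^2$ and compute
\begin{equation*}
v^T (g\Sigma g^T) v = (g^T v)^T \Sigma (g^T v) > 0,
\end{equation*}
using positive definiteness of $\Sigma$ together with the fact that $g^T v \neq 0$ since $g$ is invertible (as $g\in\SOtwo \subset \mathrm{GL}_2(\mathbb{R})$).

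As a sanity check that the resulting object really behaves as a probability distribution, I would note that the density $p_{g\mu, g\Sigma g^T}$ obtained by substituting $(g\mu, g\Sigma g^T)$ into the standard Gaussian density formula is exactly the density appearing in Proposition~\ref{prop:equivnormal}, and integrates to one by the pushforward interpretation: if $X\sim \mathcal{N}(\mu,\Sigma)$, then $Y := gX$ is an affine transformation of a Gaussian vector, hence itself Gaussian with mean $\mathbb{E}[gX] = g\mu$ and covariance $\mathbb{E}[(gX - g\mu)(gX - g\mu)^T] = g\Sigma g^T$. This change-of-variables viewpoint also gives an alternative, arguably cleaner proof of the entire proposition at once.

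There is no real obstacle here; the only subtlety worth flagging is making sure the argument uses only that $g$ is invertible (not that it is orthogonal), so the same proof works for the broader class of linear actions on $\mathbb{R}^2$, with orthogonality of $g \in \SOtwo$ providing the additional simplification $g^{-1} = g^T$ that is used elsewhere (for example in the density computation in Proposition~\ref{prop:equivnormal}).
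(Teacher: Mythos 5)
Your proof is correct and follows essentially the same route as the paper: symmetry of $g\Sigma g^T$ by transposition and positive definiteness via the substitution $w = g^T v$ with $w \neq 0$ by invertibility of $g$. The added pushforward remark (that $gX$ for $X \sim \mathcal{N}(\mu,\Sigma)$ is Gaussian with the stated parameters) and the observation that only invertibility of $g$ is needed are nice extras, but the core argument matches the paper's.
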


\begin{proof}
We prove the proposition by proving that $g\Sigma g^T$ is a symmetric positive definite matrix.

1) \textit{(Symmetry)} $(g\Sigma g^T)^T = g^{TT} \Sigma^T g^T = g\Sigma g^T$.

2) \textit{(Positive Definite)}  Let $v\in \mathbb{R}^2 $ and $v\neq 0$. Let $w = g^T v$, note that $w\neq 0$. Then $v^T g\Sigma g^T v =  w^T \Sigma w >0$, which implies $\Sigma$ is a positive definite matrix.  
\end{proof}

\subsection{Equivariance of construct for positive definite and symmetric $\Sigma$ [Proposition \ref{prop:pds}]}


Note that every positive definite symmetric matrix can be written as $\Sigma = MM^T$ for some $M\in GL_2(\mathbb{R})$, so we can represent $\Sigma$ by a map $\varphi$ where
\begin{align*}
       \varphi \colon \mathrm{GL}_2(\mathbb{R}) &\to \mathrm{PosDefSym}_2(\mathbb{R})  \\ 
    M &\mapsto M M^T
\end{align*}

\begin{prop*}
$\varphi$ is $\SOtwo$-equivariant.
\end{prop*}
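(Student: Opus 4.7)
The plan is to establish the equivariance relation $\varphi(gM) = g\,\varphi(M)\,g^T$ of \eqref{eq:act_on_gl2} by a direct matrix computation. The only algebraic fact required is that transpose reverses products, $(AB)^T = B^T A^T$. Applying this to the definition $\varphi(gM) = (gM)(gM)^T$ yields $g M M^T g^T$ by associativity of matrix multiplication, and this is exactly $g\,\varphi(M)\,g^T$.

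Before performing this one-line computation, I would briefly verify that the two $\SOtwo$-actions involved are well-defined on the domain and codomain of $\varphi$. Since $g \in \SOtwo$ is invertible, left multiplication $M \mapsto gM$ preserves $\mathrm{GL}_2(\mathbb{R})$; and by the preceding proposition (or a direct check using $g^T = g^{-1}$), the conjugation $\Sigma \mapsto g\Sigma g^T$ preserves both symmetry and positive-definiteness, so it indeed acts on $\mathrm{PosDefSym}_2(\mathbb{R})$. Only once these are in place does the equivariance statement make sense as a claim about $\SOtwo$-equivariant maps between $\SOtwo$-spaces.

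There is essentially no obstacle in this sub-claim: the argument is a three-line manipulation once the two actions have been named, and all the substantive content of Proposition \ref{prop:pds} is instead located in the surjectivity of $\varphi$ and in the construction and equivariance of the one-sided inverse $\psi$ built from the eigendecomposition $Q \Lambda Q^T$. The one conceptual point worth flagging is that the transpose in the codomain action is not arbitrary: viewing the two columns of $M$ as two copies of the standard representation $\rho_1$, the element $g$ acts columnwise by left multiplication, and the bilinear pairing $MM^T$ produces exactly the conjugation action $\Sigma \mapsto g\Sigma g^T$ on symmetric $2\times 2$ matrices, so $\varphi$ naturally intertwines the two $\SOtwo$-actions rather than merely commuting with a single one.
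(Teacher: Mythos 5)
Your computation is correct, and it is the same one-line transpose manipulation the paper uses; the only difference is which $\SOtwo$-action on the domain $\mathrm{GL}_2(\mathbb{R})$ you verify equivariance for. You prove $\varphi(gM) = (gM)(gM)^T = gMM^Tg^T = g\,\varphi(M)\,g^T$, which is exactly the form stated in equation~\ref{eq:act_on_gl2} of Proposition~\ref{prop:pds} (left multiplication on $M$, i.e.\ the columnwise $\rho_1$-action), and your argument needs no special property of $g$ beyond associativity and $(AB)^T = B^TA^T$. The paper's appendix proof instead computes $\varphi(gMg^T) = gMg^T\,g\,M^Tg^T = gMM^Tg^T$, i.e.\ it uses the two-sided action $M \mapsto gMg^T$ on the domain and needs orthogonality $g^Tg = I$ to cancel the middle factors; the result is the same conjugation action on $\mathrm{PosDefSym}_2(\mathbb{R})$. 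Your version is, if anything, more faithful to the statement in the main text and to how the output layer is actually built (columns of $M$ transformed independently by $\rho_1$), and your preliminary check that both actions are well defined on $\mathrm{GL}_2(\mathbb{R})$ and $\mathrm{PosDefSym}_2(\mathbb{R})$ is a point the paper leaves implicit. You correctly note that the substantive content of Proposition~\ref{prop:pds} lies in surjectivity and in the equivariance of the one-sided inverse $\psi$, which are outside the scope of this sub-claim.
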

\begin{proof}
$\varphi (gMg^T) = gMg^T (gMg^T)^T = gMg^T g^{TT} M^T g^T = gMM^Tg^t = g\varphi(M)g^T $
\end{proof}

We defined the one-sided inverse of $\varphi$ as $\psi$:
\begin{align*}
        \psi \colon \mathrm{PosDefSym}_2(\mathbb{R}) &\to \mathrm{GL}_2(\mathbb{R})    \\ 
        \Sigma &\mapsto Q \lambda^{\frac{1}{2}}
\end{align*}
where $Q\lambda Q^T$ is the eigendecomposition of $\Sigma$ and $Q$ is orthogonal.   That is, $\varphi (\psi(\Sigma)) = \Sigma$.
\begin{prop*}
$\psi$ is $\SOtwo$-equivariant.
\end{prop*}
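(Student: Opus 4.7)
The plan is to mirror the structure of the proof already given for $\varphi$: start from the defining eigendecomposition of $\Sigma$, apply the action on both sides, and chase the definitions. Concretely, suppose $\Sigma = Q\Lambda Q^T$ is the eigendecomposition used to define $\psi(\Sigma) = Q\Lambda^{1/2}$, with $Q$ orthogonal. First I would verify that $g\Sigma g^T = gQ\Lambda Q^T g^T = (gQ)\Lambda (gQ)^T$, and that $gQ$ is orthogonal whenever $g, Q \in \mathrm{O}(2)$, since $(gQ)^T(gQ) = Q^T g^T g Q = Q^T Q = I$. Therefore $(gQ, \Lambda)$ constitutes a valid eigendecomposition of $g\Sigma g^T$, and applying the definition of $\psi$ gives
\[
\psi(g\Sigma g^T) = (gQ)\Lambda^{1/2} = g(Q\Lambda^{1/2}) = g\,\psi(\Sigma),
\]
which is exactly equivariance under the action $M\mapsto gM$ on $\mathrm{GL}_2(\mathbb{R})$ that was used for $\varphi$ in Proposition \ref{prop:pds}.

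The main obstacle I expect is that the eigendecomposition is not unique, so $\psi$ must be interpreted carefully for the displayed equation to make sense as a statement about a function. When $\Sigma$ has two distinct eigenvalues, $Q$ is determined only up to permuting columns and flipping signs; when the eigenvalues coincide, $Q$ can be replaced by $QR$ for any $R\in \mathrm{O}(2)$. To make $\psi$ single-valued one fixes a convention (e.g.\ eigenvalues in nonincreasing order, and a sign convention for each eigenvector), and then I would need to check that the convention is $\SOtwo$-compatible in the sense that the canonical $Q$ chosen for $g\Sigma g^T$ is indeed $gQ_\Sigma$. Alternatively, and more cleanly, one can interpret $\psi$ as a set-valued right inverse of $\varphi$; the calculation above shows $g\,\psi(\Sigma) \subseteq \psi(g\Sigma g^T)$, and replacing $g$ by $g^{-1}$ in the same argument gives the reverse inclusion, so equality holds without ever picking a convention. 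In either framing, the one-sided inverse property $\varphi(\psi(\Sigma)) = Q\Lambda^{1/2}(Q\Lambda^{1/2})^T = Q\Lambda Q^T = \Sigma$ is immediate and does not interact with the equivariance argument.
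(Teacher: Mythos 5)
Your proof is correct and follows essentially the same route as the paper: both show that conjugating $\Sigma$ by $g$ transforms its eigendecomposition $Q\Lambda Q^T$ into $(gQ)\Lambda(gQ)^T$ (the paper phrases this via the eigenvector equation $(g\Sigma g^{-1})(gv_i)=\lambda_i(gv_i)$), and then read off $\psi(g\Sigma g^T)=gQ\Lambda^{1/2}=g\psi(\Sigma)$. Your additional remarks on the non-uniqueness of the eigendecomposition and how to make $\psi$ well defined address a point the paper silently glosses over, and are a welcome refinement rather than a different argument.
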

\begin{proof}
For symmetric positive definite matrix $\Sigma$, its eigenvector $v_i$ and eigenvalue $\lambda_i$ follows the eigenequation
\[
    \Sigma v_i = \lambda_i v_i .
\]
For $\forall g \in \SOtwo$, 
\[
  g\Sigma v_i = \lambda_i g v_i \implies g\Sigma g^{-1} g v_i = \lambda_i g v_i \implies (g\Sigma g^{-1}) (g v_i) = \lambda_i (g v_i) ,
\]
so $gv_i$ is an eigenvector to $g\Sigma g^{-1}$, with corresponding eigenvalue $\lambda_i$.

Therefore, 
\[
\psi (g\Sigma) = gQ\Lambda^{\frac{1}{2}} = g\psi (\Sigma).
\]
\end{proof}

\subsection{Proof of Proposition \ref{prop:equivtoinvar}: The Equivariant Model Gives an Invariant Conditional Distribution. }

We now prove Proposition \ref{prop:equivtoinvar}.

\begin{prop*}[Proposition \ref{prop:equivtoinvar}]
If the one-step probabilistic forecasting model $f_\theta$ is $G$-equivariant, then the conditional probability distribution $p_\theta( \mathbf{x}_{t+1:t+k} | \mathbf{x}_{1:t}, \mathbf{e})$ is invariant as in \autoref{eqn:invariant_prob}.
\end{prop*}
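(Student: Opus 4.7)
The plan is to unroll the autoregressive construction of $p_\theta(x_{t+1:t+k} \mid x_{1:t}, \mathbf{e})$ into a product of one-step Gaussian densities and then prove by induction on the prediction index $j$ that transforming all inputs by $g \in \SOtwo$ transforms the corresponding predicted parameters correspondingly, finally invoking Proposition \ref{prop:equivnormal} factor-by-factor.

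First, I would write the joint conditional density as
\[
p_\theta(x_{t+1:t+k} \mid x_{1:t}, \mathbf{e}) = \prod_{j=t}^{t+k-1} p_{\mu_{j+1}, \Sigma_{j+1}}(x_{j+1}),
\]
where $(\mu_{j+1}, \Sigma_{j+1}) = f_\theta(\mu_{j-t+1:j}, \Sigma_{j-t+1:j}, \mathbf{e})$ and the initial window $(\mu_{1:t}, \Sigma_{1:t})$ is determined by the past observations $x_{1:t}$ (treated as degenerate normals with zero covariance, which are compatible with the $\SOtwo$-conjugation action since $g \cdot 0 \cdot g^T = 0$).

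Next, I would set up an induction to establish the key intermediate claim: for every $j \in \{t, \ldots, t+k-1\}$, if we denote by $(\mu_{j+1}', \Sigma_{j+1}')$ the parameters produced when $f_\theta$ is evaluated on the transformed inputs $gx_{1:t}$ and $g\mathbf{e}$, then
\[
\mu_{j+1}' = g\mu_{j+1}, \qquad \Sigma_{j+1}' = g\Sigma_{j+1} g^T.
\]
The base case $j = t$ is immediate from the $G$-equivariance of $f_\theta$ applied to the observed window with zero covariances. For the inductive step, the inputs to $f_\theta$ at step $j+1$ consist precisely of the previously predicted $(\mu, \Sigma)$ tuples together with $\mathbf{e}$; by the inductive hypothesis these already equal the $g$-transformed versions of the original parameters, and one more application of equivariance yields the claim at index $j+1$.

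Finally, applying Proposition \ref{prop:equivnormal} to each factor gives
\[
p_{g\mu_{j+1}, g\Sigma_{j+1} g^T}(gx_{j+1}) = p_{\mu_{j+1}, \Sigma_{j+1}}(g^{-1} g x_{j+1}) = p_{\mu_{j+1}, \Sigma_{j+1}}(x_{j+1}),
\]
so taking the product over $j$ delivers the desired identity $p_\theta(gx_{t+1:t+k} \mid gx_{1:t}, g\mathbf{e}) = p_\theta(x_{t+1:t+k} \mid x_{1:t}, \mathbf{e})$. The main obstacle I anticipate is bookkeeping the different $\SOtwo$-actions on the heterogeneous inputs of $f_\theta$ (positions and map markers carry the standard representation $\rho_1$, while covariances transform by conjugation $\Sigma \mapsto g\Sigma g^T$), and making sure the autoregressive recursion is set up so that ``equivariance of $f_\theta$'' is meaningful across these simultaneously; once the action is specified consistently, the induction and the application of Proposition \ref{prop:equivnormal} are routine.
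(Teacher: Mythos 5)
Your proposal is correct and follows essentially the same route as the paper: unroll the autoregressive forecast into a product of one-step Gaussian densities, show by repeated application of the equivariance of $f_\theta$ that the predicted $(\mu,\Sigma)$ parameters transform as $(g\mu, g\Sigma g^T)$, and conclude factor-by-factor via Proposition \ref{prop:equivnormal}. The only (immaterial) difference is the initialization of the observed window: you use degenerate zero-covariance normals, while the paper takes $\Sigma_{1:t}=\epsilon\,\mathrm{Id}$ to stay within positive-definite covariances, both being invariant under the conjugation action.
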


\begin{proof}
Since $\mathbf{x}_{1:t}$ are known, we define $\Sigma_{1:t} = \epsilon \mathrm{Id}$ for $\epsilon$ small approximating a delta distribution at $\mu_{1:t} = \mathbf{x}_{1:t}$.
We then evaluate $\mu_{j+1},\Sigma_{j+1} = f_\theta(\mu_{j-t+1:j}, \Sigma_{j-t+1:j}, \mathbf{e})$ autoregressively for $j=1,\ldots,k$.  Finally define 
\begin{equation}
\begin{aligned}
    p_\theta( \mathbf{x}_{t+1:t+k} | \mathbf{x}_{1:t}, \mathbf{e}) &=
    \prod_{j=1}^k  p_\theta( \mathbf{x}_{t+j} | \mathbf{x}_{j:t+j-1}, \mathbf{e}) \\
    &= \prod_{j=1}^k p_{N(\mu_{t+j},\Sigma_{t+j})}(\mathbf{x}_{t+j}). 
\end{aligned}
\label{eq:timeseriespdf}
\end{equation}

For $g \in \SOtwo$, we want to show 
\[
    p_\theta( g\mathbf{x}_{t+1:t+k} | g\mathbf{x}_{1:t}, g\mathbf{e}) = p_\theta( \mathbf{x}_{t+1:t+k} | \mathbf{x}_{1:t}, \mathbf{e}).
\]
  Thus for $g\mathbf{x}_{1:t}, g\mathbf{e}$ we initialize $g \mu_{1:t},g\Sigma_{1:t}g^{T}$ since $g\Sigma_{1:t}g^T = gg^T\Sigma_{1:t}= \Sigma_{1:t}$. Applying $f_\theta$ repeatedly and invoking equivariance, we obtain 
\begin{align}
    g\mu_{t+j},g\Sigma_{t+j}g^{T} = f_\theta(g\mu_{j:t+j-1}, g\Sigma_{j:t+j-1}g^{T}, g\mathbf{e}). 
\end{align}
for $j =1,\ldots,k$.  Then by \autoref{eq:timeseriespdf}, 
\[
    p_\theta( g\mathbf{x}_{t+1:t+k} | g\mathbf{x}_{1:t}, g\mathbf{e}) =
    \prod_{j=1}^k p_{N(g\mu_{t+j},g\Sigma_{t+j}g^{T})}(g\mathbf{x}_{t+j}).
\]
Then by Proposition \ref{prop:equivnormal} this is 
\[
  \prod_{j=1}^k p_{N(\mu_{t+j},\Sigma_{t+j})}(g^{-1}g\mathbf{x}_{t+j}),
\]
which cancels $g^{-1}g = 1$, and applying \autoref{eq:timeseriespdf} again gives  $p_\theta( \mathbf{x}_{t+1:t+k} | \mathbf{x}_{1:t} \mathbf{e})$.  
\end{proof}

\subsection{Proof that MRS is a strictly proper scoring rule}
\label{app:properscoring}

Scoring rules are summary measures to evaluate probabilistic forecasts; they take in the forecasted distribution and the event or value that materializes and assign a numerical score. As defined in \cite{properscoring},  a scoring rule is proper if it is maximized when the forecaster recovers the ground truth distribution. It is strictly proper if the maximum is unique.

\begin{prop*} 
MRS is a strictly proper score.
\end{prop*}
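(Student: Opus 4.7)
My proof plan is to treat MRS as a loss (lower is better) and show that $\mathbb{E}_{Z \sim Q}[\mathrm{MRS}(P,Z)] \geq \mathbb{E}_{Z \sim Q}[\mathrm{MRS}(Q,Z)]$, with equality iff $P = Q$. The strategy is a layer-cake representation of MRS over confidence levels, followed by a Neyman--Pearson-type optimization at each level. Concretely, I introduce the rank $U(P,z) := 1 - P(R_P(z)) \in [0,1]$ and the volume function $V_P(t) := |R_P(t)|$. Two facts to check first: $z \notin R_P(1-\alpha)$ iff $U(P,z) > 1-\alpha$, and $R_P(z) = R_P(U(P,z))$, so that the region bordering $z$ is exactly the confidence region at $z$'s rank. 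These give the rewriting
\[
\mathrm{MRS}(P, z) = V_P(1-\alpha) + \frac{1}{\alpha}\int_{1-\alpha}^{1} V_P'(s)\,\mathbb{1}\{U(P,z) > s\}\,ds,
\]
since $V_P(U(P,z)) - V_P(1-\alpha) = \int_{1-\alpha}^{U(P,z)} V_P'(s)\,ds$ whenever the indicator fires.

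Next, taking expectation under $Z \sim Q$ and swapping integrals via Fubini gives
\[
\mathbb{E}_{Z \sim Q}[\mathrm{MRS}(P, Z)] = V_P(1-\alpha) + \frac{1}{\alpha}\int_{1-\alpha}^{1} V_P'(s)\bigl(1 - Q(R_P(s))\bigr)\,ds.
\]
The case $P = Q$ is handled with the probability integral transform $U(Q,Z) \sim \mathrm{Uniform}[0,1]$, so that $Q(R_Q(s)) = s$; an integration by parts then simplifies $\mathbb{E}_{Z \sim Q}[\mathrm{MRS}(Q, Z)]$ to $\tfrac{1}{\alpha}\int_{1-\alpha}^{1} V_Q(s)\,ds$. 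To establish the inequality, I would argue pointwise at each level $s$: by a Neyman--Pearson / Hardy--Littlewood rearrangement, among measurable sets of a given Lebesgue measure, the $q$-superlevel set maximizes $Q$-mass. Hence for any fixed volume profile $V_P(\cdot)$, the integrand $V_P'(s)\bigl(1 - Q(R_P(s))\bigr)$ is minimized (in $R_P(s)$) by letting $R_P(s)$ coincide with the $q$-superlevel set of volume $V_P(s)$. After this reduction the problem becomes one-dimensional in the parameter $\beta_0 := Q(R_P(1-\alpha))$, and a short calculus argument on $f(\beta_0) = V_Q(\beta_0) + \tfrac{1}{\alpha}\int_{\beta_0}^{1} V_Q'(u)(1-u)\,du$ shows the minimum is attained uniquely at $\beta_0 = 1-\alpha$, i.e., at $R_P(1-\alpha) = R_Q(1-\alpha)$.

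Strictness then comes from the equality case of the rearrangement, which forces $R_P(1-\alpha)$ to coincide $Q$-almost everywhere with $R_Q(1-\alpha)$. The hard part of the plan is not any single step but rather the strict-identifiability claim itself: in the nonparametric setting, MRS at a single $\alpha$ only pins down the $(1-\alpha)$-confidence region of $P$, so the score is proper but not strictly proper for the whole distribution, and genuine distributional strict properness would require integrating MRS over $\alpha$ (a 2D analog of the CRPS). Within the bivariate Gaussian family used throughout the paper, however, the $(1-\alpha)$-confidence ellipse already determines $(\mu, \Sigma)$ uniquely, so matching $R_P(1-\alpha) = R_Q(1-\alpha)$ forces $(\mu_P, \Sigma_P) = (\mu_Q, \Sigma_Q)$, and MRS is strictly proper in the sense claimed by the proposition.
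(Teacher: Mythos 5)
Your plan is correct in substance but takes a genuinely different route from the paper. The paper's proof is the classical interval-score-style comparison: it fixes the true level-$\alpha$ region $R_*$ (so $P(R_*^C)=\alpha$), writes a positively oriented score $S(R;z)$, shows $S(R_*;P)-S(R;P)\geq 0$ by a direct computation carried out only in the nested cases $R_*\subset R$ and $R\subset R_*$, and then reverses orientation to recover MRS; the strictness (equality only at $R=R_*$) is asserted rather than analyzed. You instead prove distributional propriety, $\mathbb{E}_{Z\sim Q}[\mathrm{MRS}(P,Z)]\geq \mathbb{E}_{Z\sim Q}[\mathrm{MRS}(Q,Z)]$, via a layer-cake decomposition over confidence levels, Fubini, a Neyman--Pearson/Hardy--Littlewood rearrangement at each level, and a short minimization of $f(\beta_0)$. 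This buys you several things: it handles arbitrary (non-nested) competing regions automatically, it needs only mild regularity (absolutely continuous densities without plateaus, so $V_P$ is absolutely continuous and superlevel sets are the unique minimal-volume regions), and---most valuably---it makes the strictness claim honest: at a single $\alpha$ the score only identifies the $(1-\alpha)$ confidence region, so strict properness holds within the bivariate Gaussian family used in the paper (where the ellipse determines $(\mu,\Sigma)$) but not nonparametrically, a caveat the paper's shorter and more elementary argument glosses over. Two small repairs to your write-up: the rank should be $U(P,z)=P(R_P(z))$ rather than $1-P(R_P(z))$ to match its later use, and the reduction to $f(\beta_0)$ should spell out the change of variables $u(s)$ defined by $V_Q(u(s))=V_P(s)$, under which $V_P'(s)\,ds=V_Q'(u)\,du$ and the rearrangement bound $Q(R_P(s))\leq u(s)$ yields exactly the one-dimensional problem you state, with the equality case then forcing $R_P(1-\alpha)$ to agree with $R_Q(1-\alpha)$ up to $Q$-null sets.
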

\begin{proof}
For arbitrary function $h$, we first define the scoring rule
\[
 S(R;z) = -|R| + \frac{1}{\alpha}(|R| - |R(z)|)\mathbb{1}\{z\in R^C\} + h(z),
\]
which means if the forecaster quotes region $R$ at the level $\alpha\in (0,1)$ and $z$ materializes, then the score $S(R;z)$ will be rewarded, where $R$ and $R(z)$ are defined as ellipsoid parametric by $\mu$ and $\Sigma$. Then the expected score under the probability measure $P\in \mathcal{P}$ which $z$ follows is defined as
\[
 S(R;P) = \int S(R;z) \, dP(z). 
\]
For $P\in \mathcal{P}$, let $R_*$ denote the unique true $P$-region at level $\alpha$. We say that scoring rule $S$ is strictly proper if 
\[
 S(R_*;P) \geq S(R;P),
\]
for all region $R$ and for all probability measures $P\in \mathcal{P}$, where equality holds if and only if $R = R_*$. 

We identify $P$ with the associated distribution function so that $P(R_*^C)=\alpha$. If $R_* \subset R$, then
\begin{align*}
    S(R_*;P) - S(R;P) & = \int -|R_*| \, dP(z) + \frac{1}{\alpha} \int_{R_*^C} (|R_*| - |R(z)|) \, dP(z) \\
    & + \int |R| \, dP(z) - \frac{1}{\alpha} \int_{R^C} (|R| - |R(z)|) \, dP(z) \\
    & = -|R_*| + |R| + \frac{1}{\alpha}|R_*|P(R_*^C) - \frac{1}{\alpha}|R|P(R^C) - \frac{1}{\alpha}\int _{R_*^C \setminus R^C} |R(z)| \, dP(z) \\
    & \geq |R| - \frac{1}{\alpha}|R|P(R^C) - \frac{1}{\alpha}|R|(P(R_*^C)-P(R^C)) \\
    & = 0, 
\end{align*}
as it is supposed to be. If $R \subset R_*$, then an analogous argument applies. 

Putting $h(z) = 0$ and reversing the sign of the scoring rule, yields the negatively oriented regional score. Finally, the mean score of this oriented regional score is our MRS, which is then proved to be proper.
\end{proof}

\section{Equivariant Matrix Output Layer Construction}
\label{app:matrix}

Let 
\[
    \rho_1(\theta) = \begin{pmatrix}
    \cos{\theta} & -\sin{\theta} \\
    \sin{\theta} & \cos{\theta}
    \end{pmatrix} 
    \qquad\qquad
        M = \begin{pmatrix}
    a & b \\
    c & d
    \end{pmatrix}
\]
where $a,b,c,d$ are trainable parameters.
Consider input $f \colon S^1 \to \mathbb{R}$, then
\[
F(f) = \int_{\theta=0}^{2\pi} f(\theta) \rho_1(\theta) M \rho_1(-\theta) d\theta.
\]
Define $\rho_{reg}(\phi)(f)(\theta) = f(\theta-\phi)$.  
\vspace{5px}
\begin{prop*}
The convolution $F$ is equivariant, i.e.,
\[
F(\rho_{reg}(\phi) f) = \rho_1(\phi) F(f) \rho_1(-\phi).
\]
\end{prop*}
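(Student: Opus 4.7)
The plan is a straightforward change-of-variables argument that exploits two structural facts: the group-homomorphism property $\rho_1(\alpha+\beta) = \rho_1(\alpha)\rho_1(\beta)$ of the standard representation, and the invariance of integration over $S^1$ under translation in the angular coordinate. No nontrivial analytic estimate is needed, so I do not anticipate a hard step; the only care required is keeping the matrix factors in the right order since matrix multiplication is noncommutative.

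First I would unfold the definitions on the left-hand side, writing
\[
F(\rho_{reg}(\phi) f) = \int_{0}^{2\pi} f(\theta-\phi)\,\rho_1(\theta)\,M\,\rho_1(-\theta)\,d\theta.
\]
Next I would substitute $\psi = \theta - \phi$, noting that $d\theta = d\psi$ and that the shifted interval $[-\phi,\,2\pi-\phi]$ can be replaced by $[0,2\pi]$ because the integrand is $2\pi$-periodic in $\psi$ (every factor depends on $\psi$ only through $\rho_1$, which is $2\pi$-periodic, and through $f$, which is defined on $S^1$). This gives
\[
F(\rho_{reg}(\phi) f) = \int_{0}^{2\pi} f(\psi)\,\rho_1(\psi+\phi)\,M\,\rho_1(-\psi-\phi)\,d\psi.
\]

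Then I would use the homomorphism property to split $\rho_1(\psi+\phi) = \rho_1(\phi)\rho_1(\psi)$ and $\rho_1(-\psi-\phi) = \rho_1(-\psi)\rho_1(-\phi)$. Because $\rho_1(\phi)$ and $\rho_1(-\phi)$ are constants with respect to the integration variable, they can be pulled outside on the left and right respectively, yielding
\[
\rho_1(\phi)\!\left(\int_0^{2\pi} f(\psi)\,\rho_1(\psi)\,M\,\rho_1(-\psi)\,d\psi\right)\!\rho_1(-\phi) = \rho_1(\phi)\,F(f)\,\rho_1(-\phi),
\]
which is the desired identity. The main thing to be careful about is the ordering of factors when pulling constants out of the integral and verifying that the periodicity argument is applied to every factor of the integrand; once that bookkeeping is done the result follows immediately.
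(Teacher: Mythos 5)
Your proposal is correct and follows essentially the same argument as the paper: unfold the definition, substitute $u=\theta-\phi$, use $2\pi$-periodicity to restore the limits of integration, and pull out $\rho_1(\phi)$ and $\rho_1(-\phi)$ via the homomorphism property. The only difference is the (immaterial) order in which the periodicity shift and the factor-splitting are performed.
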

\begin{proof}
We compute
\begin{align*}
    F(\rho_{reg} (\phi)f) 
    &= \int_{\theta=0}^{2\pi}  \rho_{reg}(\phi)(f)(\theta) \rho_1(\theta) M\rho_1(-\theta) d \theta \\
    &= \int_{\theta=0}^{2\pi}  f(\theta-\phi) \rho_1(\theta) M\rho_1(-\theta) d \theta \\
    &= \int_{u=-\phi}^{2\pi-\phi} f(u) \rho_1(u+\phi)M\rho_1(-u-\phi)du \quad (\text{substituting } u = \theta - \phi) \\ 
    &= \rho_1(\phi) \cdot \left( \int_{u=-\phi}^{2\pi-\phi} f(u) \rho_1(u)M\rho_1(-u)du \right) \cdot \rho_1(-\phi) \\
    &=  \rho_1(\phi) F(f) \rho_1(-\phi). 
\end{align*} 
The last substitution follows from the fact the integrand is periodic of period $2\pi$. Since the integral is over the whole circle, this equal to taking the limits of integration to be $0$ and $2\pi$.
\end{proof}

\vspace{3px}

\paragraph{Discritization.}
If the function $f$ is discretized, such that $f_i = f(i2\pi/n)$.  Then the mapping $F$ is given by 
\[
F(f) = \sum_{i=0}^{n-1} f_i \rho_1(i2\pi/n) M \rho_1(-i2\pi/n).
\]

\paragraph{Compatibilitiy with matrix exponential}

For a $n \times n$ matrix $M$ define 
\[
\exp(M) = \sum_{k=0}^\infty M^k/k!.
\]
Matrix exponentiation is equivariant with respect to conjugation.  That is for $g \in \mathrm{GL}_n(\mathbb{R})$, 
\[
\exp(gMg^{-1}) = \sum_{k=0}^\infty (gMg^{-1})^k/k!. 
\]
Now note that
\[
(gMg^{-1})^k = (gMg^{-1}) (gMg^{-1}) \ldots (gMg^{-1}) = gMM \ldots Mg^{-1} = g M^k g^{-1}.
\]
So then 
\[
\sum_{k=0}^\infty (gMg^{-1})^k/k! = \sum_{k=0}^\infty gM^kg^{-1}/k! = g \left( \sum_{k=0}^\infty M^k/k! \right) g^{-1} = g \exp(M) g^{-1}.
\]

\paragraph{Structured Network Output}

So the last layers of the neural network will be $\exp \circ F$ which outputs an invertible $2 \times 2$ matrix and satisfies 
\[
(\exp \circ F)(\rho_{reg}(\phi) f) = \rho_1(\phi) (\exp \circ F)(f) \rho_1(-\phi).
\]

\section{Algorithms}

\subsection{Score Function for Probabilistic Forecast}

We consider a probabilistic approach to trajectory forecasting. Ideally, we want the predicted distribution to be both sharp and valid under the definition in \Eqref{eq:coverage}. To quantify the uncertainty, we review the classic score function Mean Interval Score (MIS) and derive a 2 dimensional extension, Mean Regional Score (MRS).




\paragraph{Mean Interval Score.}
Mean Interval Score (MIS) is a proper scoring rule \cite{properscoring} for interval forecasts that rewards narrower confidence intervals and encourages coverage. 
Specifically, let $Y  \sim {P}_Y$ be a one-dimensional random variable, and an upper bound $u$, a lower bound $l$ be its estimated $\frac{\alpha}{2}$ and $(1-\frac{\alpha}{2})$ quantiles respectively, MIS is defined using samples $y_i \sim {P}_Y$.
\begin{align}
    \mathrm{MIS}(u,l; Y) = \frac{1}{N} \sum_{i=1}^{N}[(u-l) +
     \frac{2}{\alpha}(y_i-u)\mathbb{1}\{y_i>u\}\nonumber 
     + \frac{2}{\alpha}(l-y_i)\mathbb{1}\{y_i<l\} ] 
    \label{eq:mis}
\end{align}
Where the three terms in  \Eqref{eq:mis} accounts for the width of the interval, and how much the sample point exceeds the upper bound and the lower bound, respectively. MIS has the advantage of being easy to compute and does not require the model to be parametric \cite{properscoring}. It can also be used as an objective function for probabilistic forecasting \cite{wu2021quantifying}.

\paragraph{Mean Regional Score.}
For n-D distributions, we introduce a new metric called the Mean Region Score (MRS), which generalizes MIS to higher dimensions. 
Let $Z \sim {P}_Z$ be an n-dimensional continuous random variable. First, we need to generalize  $\alpha$-quantile and define the  confidence region. Let the region bordering $z$ as $R(z)$, the  confidence region of level $\alpha$ 
\begin{align}
R_P(\alpha) =  \inf \lbrace R(z)  | \int_{R(z)} P( z ) dz \geq \alpha \rbrace
\end{align}
is the smallest region whose probability exceeds  level $\alpha$.

\begin{figure}
    \centering
  \includegraphics[width=.8\linewidth]{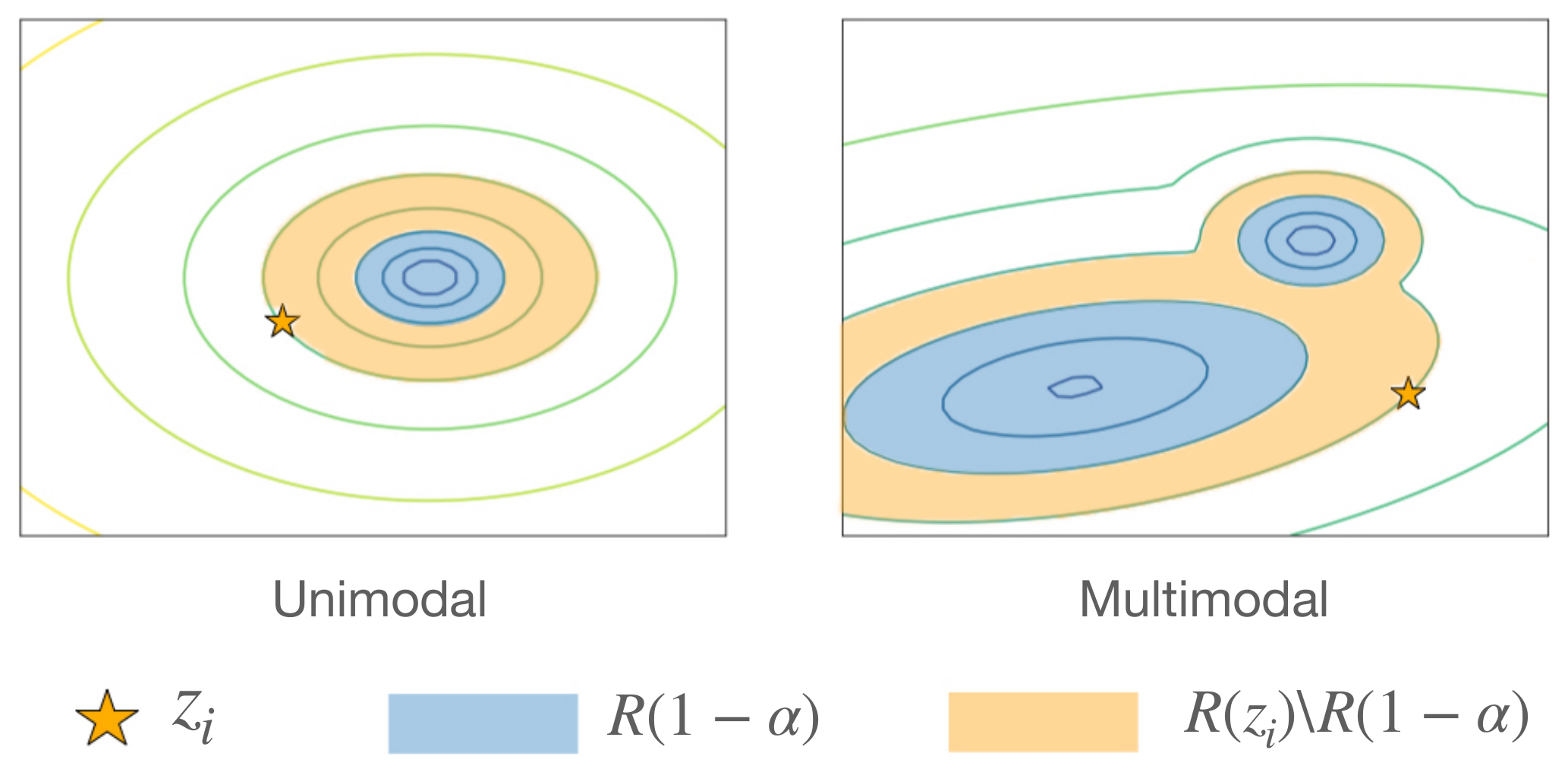}
  \caption{MRS illustration for   unimodal and multimodal distributions. Given a sample $z_i$, MRS  calculates the area of blue region plus that of the the orange region scaled by $1/\alpha$. }
  \label{fig:mrs}
\end{figure}
When the distribution is Gaussian $P_Z =\mathcal{N}(\mu, \Sigma)$ with mean $\mu \in \mathbb{R}^n$ and the covariance matrix $\Sigma \in \mathbb{R}^{2\times 2}$ that is positive definite. The $(1-\alpha)$ confidence region of an n-D multivariate normal distribution $P = \mathcal{N}(\mu, \Sigma)$ is an ellipsoid that can be written as \citep{slotani1964tolerance}:
\[
    R(1-\alpha) = \{ z| (z-\mu)^T \Sigma^{-1} (z-\mu) \leq  \chi_n^2(1-\alpha)\}.
\]
where $\chi_n^2$ is the chi-squared distribution with n degrees of freedom.
Then, for a given sample $z_i \in \mathbb{R}^n$, we can draw an ellipsoid whose edge coincides $z_i$ defined as 
\[
    R(z_i) = \{z|(z-\mu)^T\Sigma^{-1}(z-\mu) \leq c'\} 
\]
where $c' = (z_i - \mu)^T \Sigma^{-1} (z_i - \mu)$.


The MRS score of the $(1-\alpha)$ prediction interval thus can be evaluated as:
\begin{align}
    \mathrm{MRS}(R ;Z) = \frac{1}{N} \sum_{i=1}^N  [|R(1-\alpha)| +   \frac{1}{\alpha}| R(z_i) \backslash R(1-\alpha)| \mathbb{1} \{z_i \in R^C(1-\alpha)\}]
\label{eq:mrs}
\end{align}
where the first term  corresponds to the area of the confidence region and the second term measures how much each data $z_i$ deviates from the region. $|\cdot|$ measures the size of the set.  It is also easy to prove (see Appendix \ref{app:properscoring}) that MRS is a proper scoring rule \cite{properscoring}, which means MRS is optimized if and only if the distribution $P$ recovers that of $Z$.

Figure \ref{fig:mrs} illustrates MRS for both unimodal and multimodal distributions.  Note that when the distribution is single-modal, the confidence interval in \Eqref{eq:mrs} is both \textit{equally tailed} and the \textit{shortest} confidence interval. For  multi-modal distributions,  we solve for the \textit{smallest} confidence region to cover probable events. In appendix \ref{app:mrsgeneral} we present a method to estimate MRS for arbitrary 2-D distributions, where we divide the domain space into small grids to estimate density, and obtain $R(1-\alpha)$ and $R(z_i)$ by assembling high density regions.
%


\subsection{Empirical method for estimating MRS for general 2D distribution }
\label{app:mrsgeneral}

We present an empirical method to estimate MRS given any 2D distribution. We first discretize the 2D space that is the domain of the probability distribution into $n$ grid cells, indexing them as $g_i$ where $i\in \{1,\ldots,j\}$. Then we can numerically estimate the density of each grid cell $p(g_i)$ for $i= 1,\ldots,n$. The confidence region can be estimated by selecting the grid cells with the highest likelihoods until they sum to $1-\alpha$. 
\[
R(1-\alpha) = \bigcup_{i\in K} g_i, \quad K =  \text{argmin}_K: \Sigma_{i\in K} p(g_i) \geq 1-\alpha
\]
where $K$ is the superlevel set of grid cells $g_i$ whose empirical probability density is at least  $1-\alpha$.

Similarly, we define the estimated region 
\[ 
R (z_i) = \bigcup g_i \quad \text{ for } p(g_i) \geq p(z_i)
\]
as the superlevel set of data point $z_i$. This way, we can calculate MRS for any 2D using the estimated $R(1-\alpha)$ and $R(z_i)$ in the MRS formula. 
\[
    \mathrm{MRS}(R ;Z) = \frac{1}{N} \sum_{i=1}^N  [|R(1-\alpha)| +     \frac{1}{\alpha}| R(z_i) \backslash R(1-\alpha)| \mathbb{1} \{z_i \in R^C(1-\alpha)\}]
\]

Figure \ref{fig:empiricalmrs} illustrates the idea, this method is used to estimate the MRS scores for multimodal methods \texttt{Trajectron++} and \texttt{MFP} in our experiments.

\begin{figure}[!h]
  \centering
  \includegraphics[width=.5\linewidth]{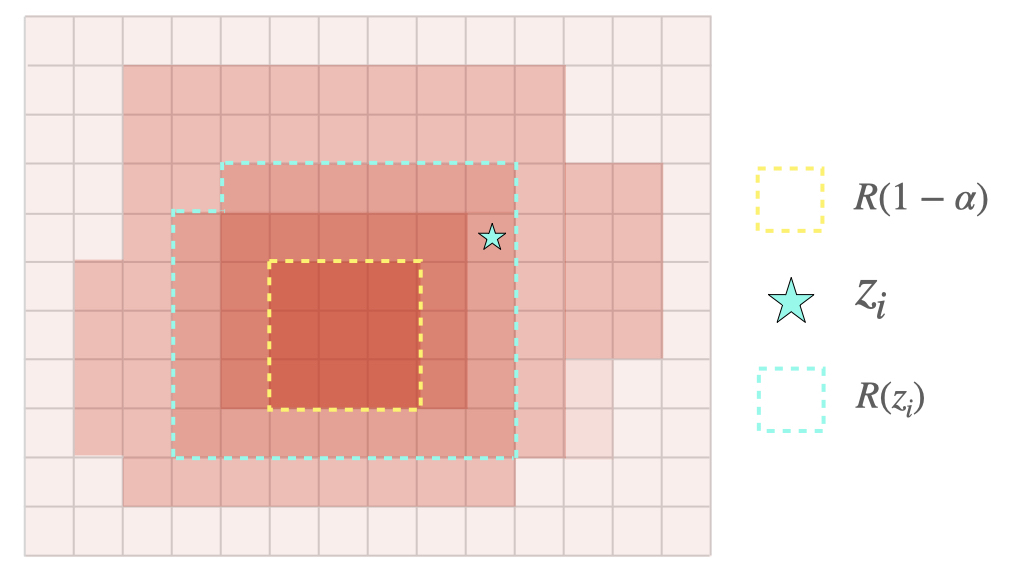}
  \caption{Illustration for empirical MRS estimation, where we calculate the probability density at each grid, and select the highest density grids to construct $R(z_i)$ and $R(1-\alpha)$. This way, we may approximate MRS numerically.}
  \label{fig:empiricalmrs}
\end{figure}

\subsection{Conformal algorithm for 2-D Time Series}
\label{app:conformal}
We extend the inductive conformal prediction (ICP) algorithm \cite{shafer2008tutorial} to conformally estimate a confidence region for 2D forecasts. Given an exchangeable set of trajectory observations $\mathcal{D} = \{ (\mathbf{x}_{1:t}^{(i)}, \mathbf{e}^{(i)}), \mathbf{x}_{t+1:t+k}^{(i)}\}_{i=1}^l$ and a new sample $(\mathbf{x}_{1:t}^{(l+1)}, \mathbf{e}^{(l+1)})$, the ICP algorithm returns $k$ prediction intervals,$[\Gamma^{\alpha}_1, \ldots, \Gamma^{\alpha}_k]$, one for each timestep, such that:
\[
\mathbb{P}[\: \forall j \in \{1,\ldots, k\}, \: x_{t+j} \in \Gamma^{\alpha}_j(\mathbf{x}^{(l+1)})\: ] \geq 1-\alpha
\]
for any underlying predictive model. This inequality is the \textit{validity} condition. \cite{2021conformal} proves the 1-D case for time-series forecasting validity; we leave proving it for the 2-D case for future work.

ICP requires an underlying forecasting model and a choice of nonconformity score. In our case, LSTM and ECCO are both models that take as input $(\mathbf{x}_{1:t}, \mathbf{e})$ and outputs $\mathbf{x}_{t+1:t+k}$. We select root mean square error (RMSE) as our nonconformity score as it is commonly used in the 1-d setting and naturally extends to high dimensions. We split the training set into the proper training set and a calibration set of equal size: $\mathcal{D} =\mathcal{D}_{train} \bigcup \mathcal{D}_{cal}$. We train our model $M$ on $\mathcal{D}_{train}$ and obtain the critical nonconformity scores $\hat{\gamma_1}, \ldots, \hat{\gamma_k}$ as in algorithm 1 in \cite{2021conformal}.
Then, for input $(\mathbf{x}_{1:t}, \mathbf{e})$ and $ \hat{x}_{t+1:t+k} = M(\mathbf{x}_{1:t}, \mathbf{e})$ we can construct a set of confidence intervals 

\[
\Gamma^{\alpha}_j(\mathbf{x}^{(l+1)}_{(1:t)}) = \{ x\in \mathbb{R}^2 \; | \; RMSE(x, \hat{x}_j) \leq \hat{\gamma}_j\} \; \text{ for } j \in\{1,\ldots, k\} 
\]

We illustrate some example scenes with confidence regions provided by conformal-ECCO in figure \ref{fig:conformal}. Note that the regions for timestep $j$ are given by $\hat{\gamma}_j$, hence are the same for every scenario. With our choice of RMSE as the nonconformity score, the region is a circle; they appear elliptical due to scale compression. 

\begin{figure*}[ht]
\centering
\begin{subfigure}
  \centering
  \includegraphics[width=.245\linewidth]{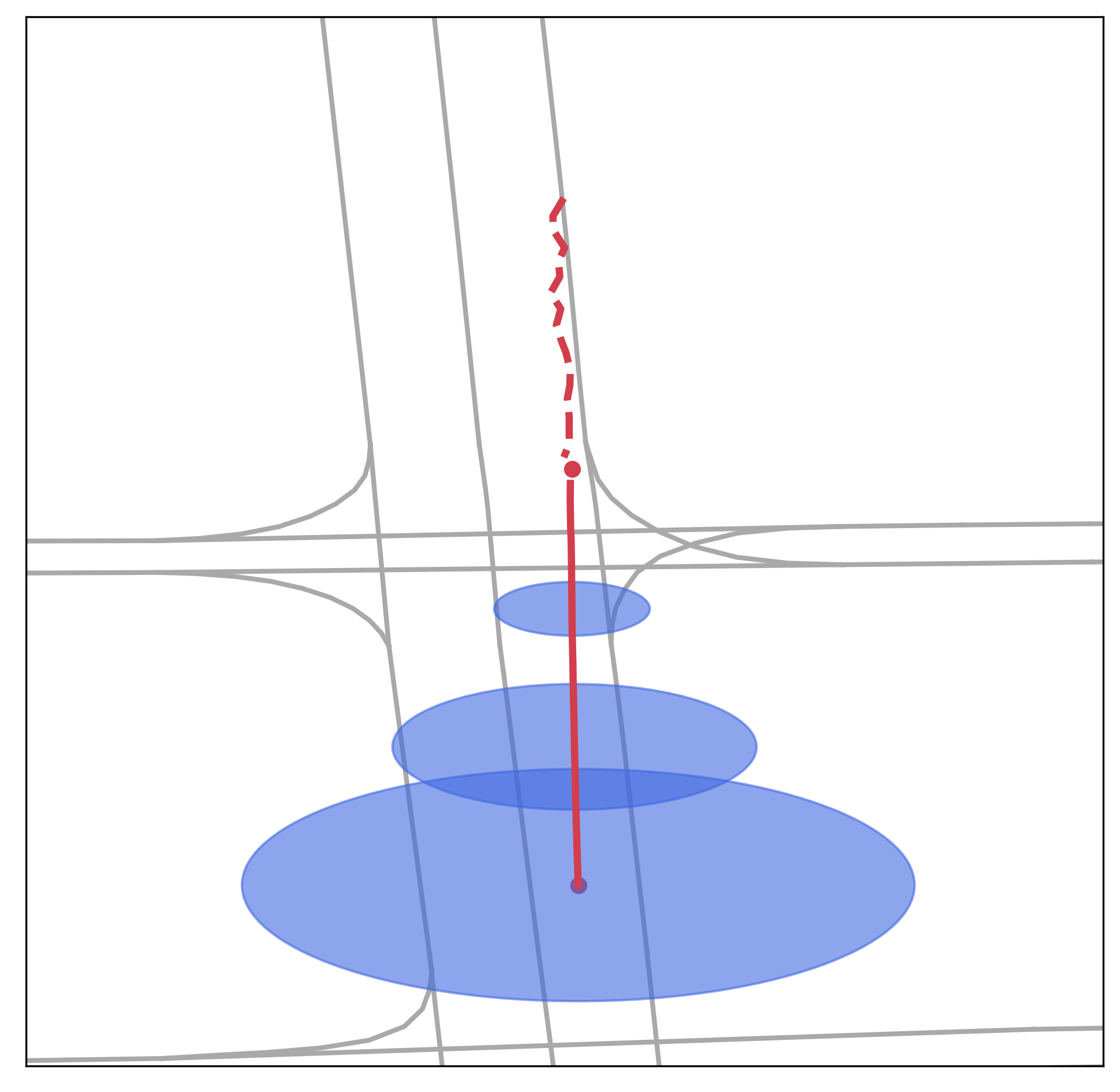}
\end{subfigure}%
\begin{subfigure}
  \centering
  \includegraphics[width=.245\linewidth]{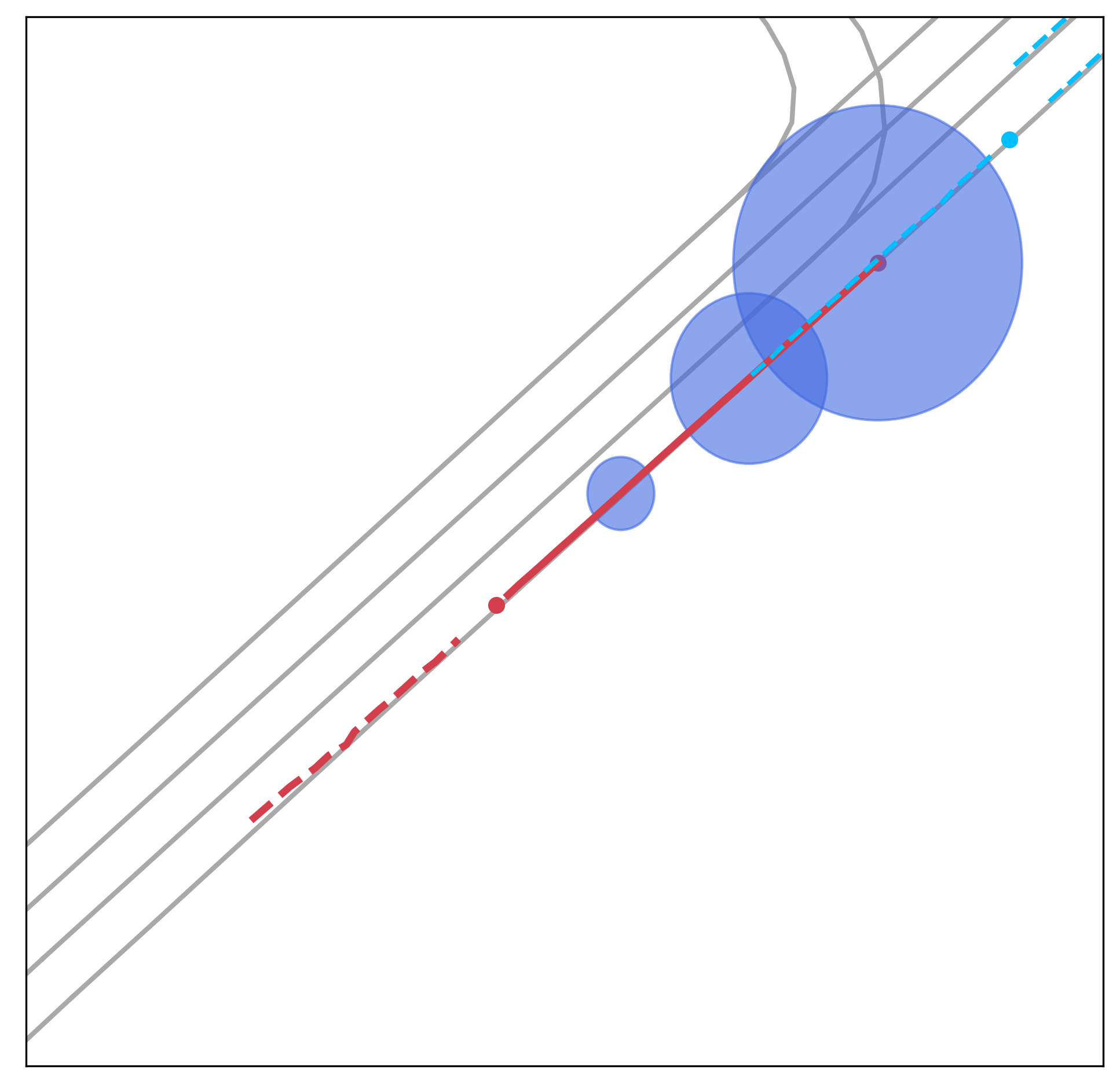}
\end{subfigure}
\begin{subfigure}
  \centering
  \includegraphics[width=.245\linewidth]{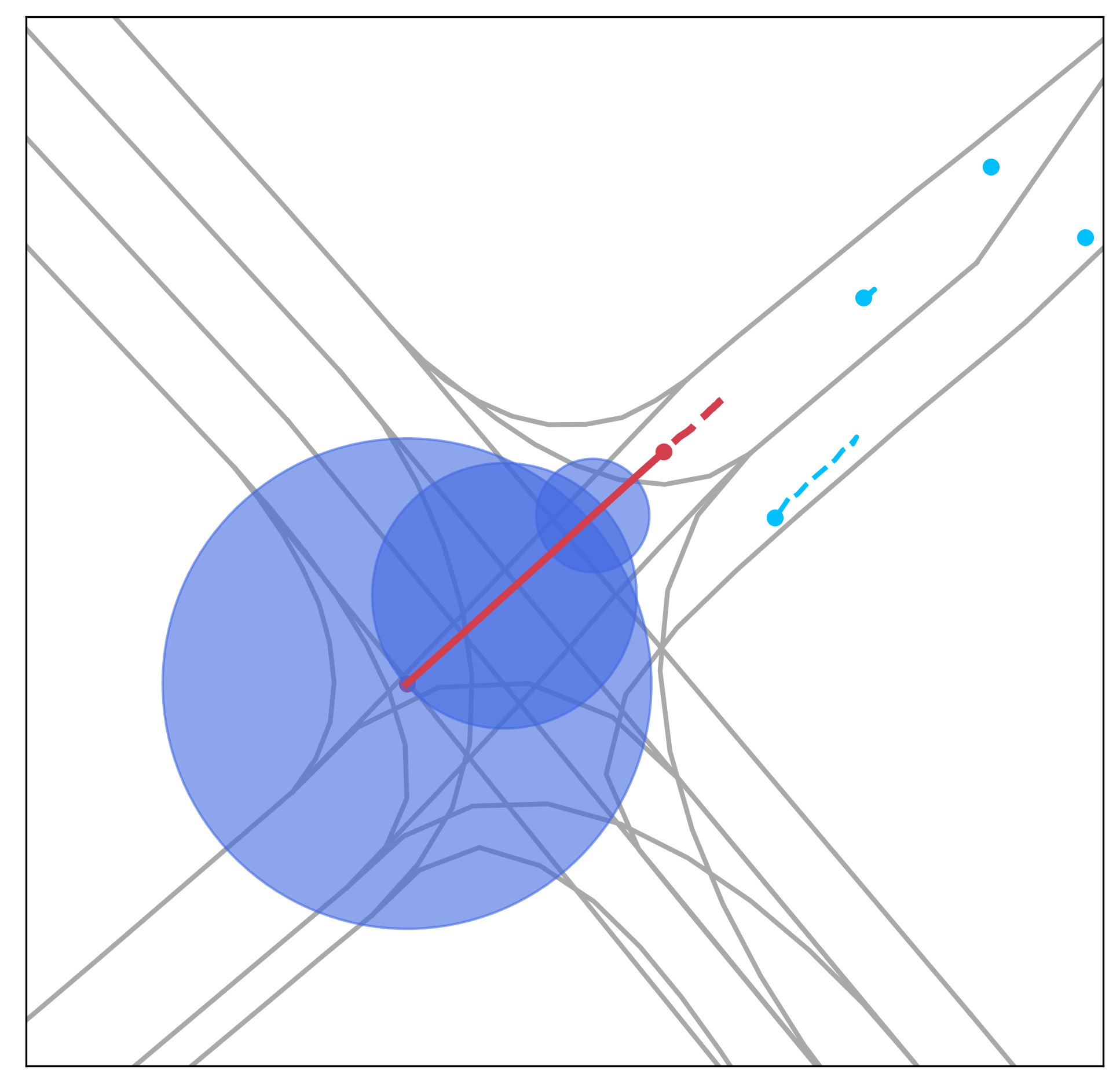}
\end{subfigure}%
\begin{subfigure}
  \centering
  \includegraphics[width=.245\linewidth]{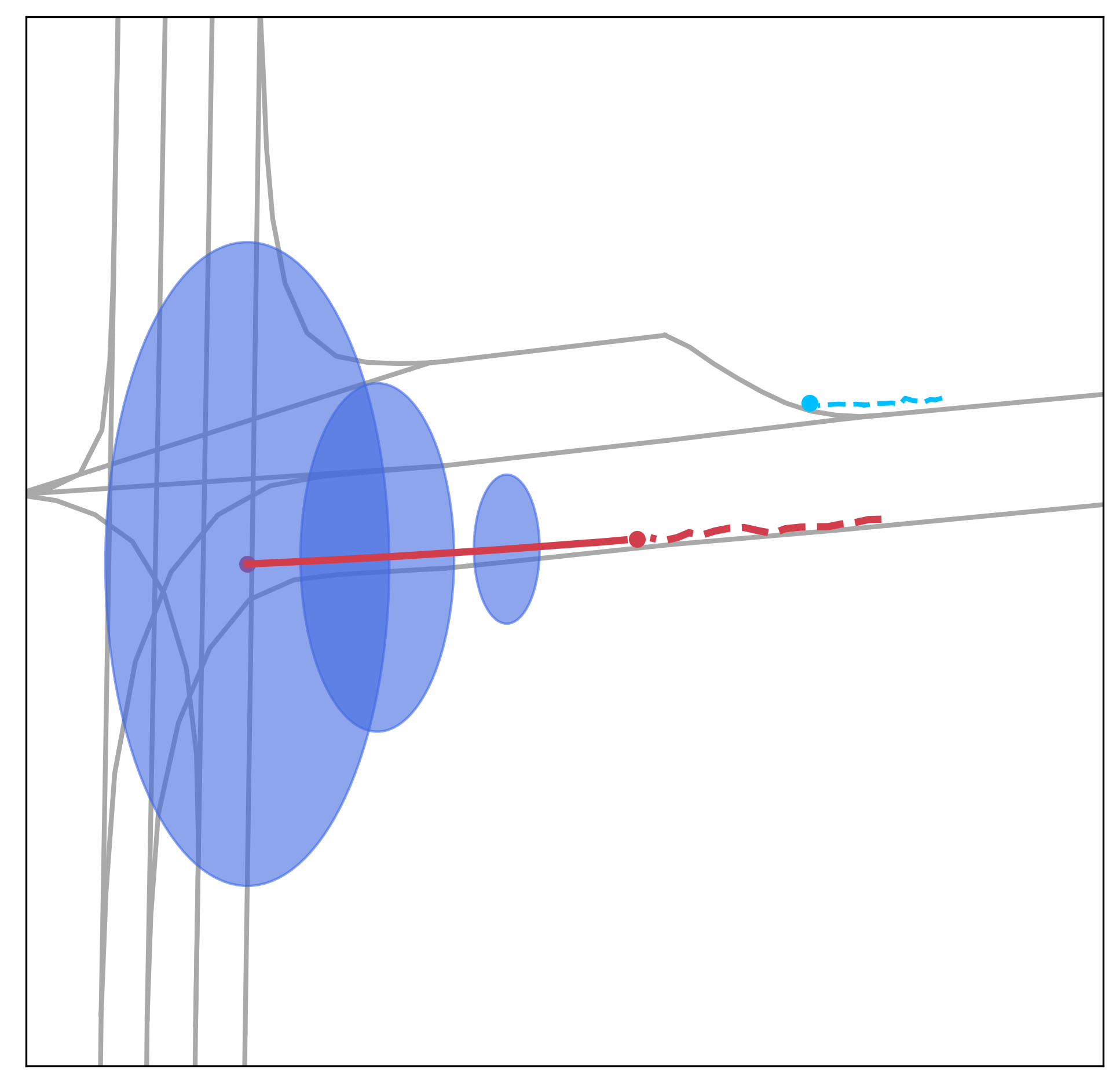}
\end{subfigure}
\caption{Scene illustrations for confidence regions given by conformal-ECCO. Note that in order to achieve 90\% coverage, the region is larger than needed, especially in straight-lane cases like left two scenarios. Due to the lack of underlying assumption about the distribution, the model isn't able to adjust the shape of uncertainty according to data either.}
\label{fig:conformal}
\end{figure*}

\section{Experiment Details}

\subsection{Implementation Details}
\label{app:exp}

The numbers  in \autoref{tab:big-table} are reported on the official validation set for Argoverse, and on a 10\% test split for TrajNet++. For the particle dataset, the reported numbers are also on a 10\% test split.

\paragraph{Dataset and Preprocessing} 
The synthetic particle dataset is generated by the spring dynamics simulator in \cite{nri}. We added $\sigma=0.01$ noise to the dynamics each step to introduce some uncertainty. The dataset consists of 12,000 time series of trajectories of 5 interacting particles over 50 timesteps ; we use 30 steps as input and predict the following 20 steps. 1,000 samples are used for model validation and 1,000 are used for testing.

The Argoverse autonomous vehicle dataset contains 205,942 samples, consisting of diverse driving scenarios from Miami and Pittsburgh.  We split 90/10 into a training set and validation set of size 185,348 and 20,594 respectively. The official validation set of size 39,472 is used for testing and reporting performance. We preprocess the scenes to filter out incomplete trajectories and cap the number of vehicles modeled to 60. If there are less than 60 cars in the scenario, we insert dummy cars into them to achieve consistent car numbers. For map information, we only include center lanes with lane directions as features. Similar to vehicles, we introduce dummy lane nodes into each scene to make lane numbers consistently equal to 650.

We use the latest release of the TrajNet++ dataset (Update 4.0) for our pedestrian experiments. TrajNet++ is a compiled set of pedestrian trajectories captured in both
indoor and outdoor locations such as in universities, hotels, Zara, and train stations. The sample in this dataset is 21 timestamps long, and the goal is to predict the 2D spatial positions for each pedestrian in the future 12 timestamps given the first 9.
The pedestrian dataset contains 240,896 samples, which we split 80/10/10 into train, validation, and test sets. Similar to Argoverse, we filter out incomplete trajectories in processing and either cap or insert dummy pedestrians such that each scene has 60 agents. No map information was used in the pedestrian dataset. 

We include our code for preprocessing, model implementation, and training in the supplementary materials.

\paragraph{Hyperparameters and Training Details} We trained the PECCO model with 4 equivariant continuous convolution layers of hidden size of (8, 16, 16, 16) respectively for Argoverse, and (4, 8, 16, 16) for the Trajnet++ pedestrian dataset. Our models are all trained with Adam optimizer with a base learning rate $r = 0.001$, and linear learning rate scheduler set to $\gamma = 0.95$.  For Argoverse task, we set the CtsConv radius to be 40, and for the pedestrian task we set it to be 6. PECCO's Arogoverse model has 129k parameters. For comparison, Trajectron++ has 127k and MFP has 67K. 

All our models without map information are trained for 10K iterations with batch size 32 with learning rate updating every 150 iterations. Most of our experiments are performed on a server with 4 RTX 2080 Ti GPUs, and it takes around 9-12 hours to finish training. We run each experiment 3 times with different random initialization and data order. The numbers reported in tables \ref{tab:big-table}, \ref{tab:conformal}, and \ref{tab:deterministic} are the mean and standard deviation of those 3 runs. 

\subsection{Deterministic Baseline Results}
\label{app:deterministic}
We present numbers for deterministic baseline models for Argoverse and Trajnet++. By sampling, we achieve better minADE/minFDE performance with probabilistic models, but they serve as valuable baselines to illustrate the difficulty of the task. 
\begin{table*}[h]
\centering
\begin{tabular}{ c| c c | c c  } 
 \toprule
   &  \multicolumn{2}{c}{Argoverse}  & \multicolumn{2}{c}{Trajnet++}  \\ 
  \midrule
  \textbf{Model}  &  ADE$ \downarrow$ & FDE$\downarrow$ & ADE$ \downarrow$ & FDE$\downarrow$ \\ 
  \midrule
Constant Velocity & 2.77 & 6.16 &1.21 & 2.37\\
Nearest Neighbor   & 3.52 & 7.85 & 1.25 & 2.61\\
LSTM    & 1.97 \plusminus{.05} & 4.98 \plusminus{.31} &1.01  \plusminus{.02}  & 1.98  \plusminus{.08} \\
CtsConv & 1.87 \plusminus{.06} & 4.43 \plusminus{.28} &1.35  \plusminus{.05} & 2.97 \plusminus{.16} \\
ECCO    & 1.68 \plusminus{.04} & 3.98 \plusminus{.19}  & 0.94  \plusminus{.01}  & 2.05 \plusminus{.03} \\
\bottomrule
\end{tabular}
\caption{ Deterministic baselines on the Argoverse and TrajNet++ dataset}
\label{tab:deterministic}
\end{table*}

\end{document}